\documentclass[twoside,11pt]{article}

\usepackage{blindtext}
\usepackage{graphicx}
\usepackage{amsmath}
\usepackage{amssymb}
\usepackage{bbm}
\usepackage{pgfplots}
\pgfplotsset{compat=1.18}
\usepackage{multirow}
\usepackage{float}
\usepackage{mathtools}

\usepackage[figuresleft]{rotating}
\usepackage{booktabs}
\usepackage{arydshln}
\usepackage[caption = false]{subfig}
% Any additional packages needed should be included after jmlr2e.
% Note that jmlr2e.sty includes epsfig, amssymb, natbib and graphicx,
% and defines many common macros, such as 'proof' and 'example'.
%
% It also sets the bibliographystyle to plainnat; for more information on
% natbib citation styles, see the natbib documentation, a copy of which
% is archived at http://www.jmlr.org/format/natbib.pdf

% Available options for package jmlr2e are:
%
%   - abbrvbib : use abbrvnat for the bibliography style
%   - nohyperref : do not load the hyperref package
%   - preprint : remove JMLR specific information from the template,
%         useful for example for posting to preprint servers.
%
% Example of using the package with custom options:
%
% \usepackage[abbrvbib, preprint]{jmlr2e}

\usepackage{jmlr2e}

% Definitions of handy macros can be found here

\DeclareMathOperator*{\argmax}{arg\,max}
\DeclareMathOperator*{\argmin}{arg\,min}
% Heading arguments are {volume}{year}{pages}{date submitted}{date published}{paper id}{author-full-names}

\usepackage{lastpage}
\jmlrheading{25}{2024}{(Submitted)}{5/30/24; Revised --}{--}{21-0000}{Bertsimas and Peroni}

% Short headings should be running head and authors last names

\ShortHeadings{Interpretable and Adaptive Model Selection}{Bertsimas and Peroni}
\firstpageno{1}

\begin{document}

\title{Policy Trees for Prediction: Interpretable and Adaptive Model Selection for Machine Learning}

\author{\name Dimitris Bertsimas \email dbertsim@mit.edu \\
       \addr Sloan School of Management and Operations Research Center\\
       Massachusetts Institute of Technology\\
       Cambridge, MA 02139, USA
       \AND
       \name Matthew Peroni \email mperoni1@mit.edu \\
       \addr Operations Research Center\\
       Massachusetts Institute of Technology\\
       Cambridge, MA 02139, USA}

\editor{--}

\maketitle

\begin{abstract}%   <- trailing '%' for backward compatibility of .sty file
    As a multitude of capable machine learning (ML) models become widely available in forms such as open-source software and public APIs, central questions remain regarding their use in real-world applications, especially in high-stakes decision-making. Is there always one best model that should be used? When are the models likely to be error-prone? Should a black-box or interpretable model be used? In this work, we develop a prescriptive methodology to address these key questions, introducing a tree-based approach, \textbf{Optimal Predictive-Policy Trees (OP\textsuperscript{2}T)}, that yields interpretable policies for adaptively selecting a predictive model or ensemble, along with a parameterized option to reject making a prediction. We base our methods on learning globally optimized prescriptive trees. Our approach enables interpretable and adaptive model selection and rejection while only assuming access to model outputs. By learning policies over different feature spaces, including the model outputs, our approach works with both structured and unstructured datasets. We evaluate our approach on real-world datasets, including regression and classification tasks with both structured and unstructured data. We demonstrate that our approach provides both strong performance against baseline methods while yielding insights that help answer critical questions about which models to use, and when.
\end{abstract}

\begin{keywords}
  Model Selection, Rejection Learning, Decision Trees, Optimization
\end{keywords}

\section{Introduction}
As increasingly advanced machine learning (ML) algorithms and pre-trained models become democratized and readily available across hundreds of open-source software packages and APIs, such as Scikit-learn \citep{scikit-learn}, PyTorch \citep{paszke2019pytorch}, and HuggingFace \citep{wolf2020huggingfaces}, central questions remain regarding their use in high-stakes decision-making. 
\newpage
\begin{itemize}
    \item Given a collection of models, which should we use for our application, and when?
    \item Should we use a model ensemble?
    \item When are the models likely to be error-prone?
    \item  Should we use a black-box model or an interpretable model?
\end{itemize}

These questions are, to some degree, open-ended and philosophical. Considerations may include computational resources, inference time, and the qualitative importance of interpretability. To make them more concrete, we aim to address these questions with respect to model performance. For example, when answering whether a model ensemble should be used, we are only concerned with whether the ensemble improves out-of-sample performance. Similarly, we address the question of when models may be error-prone by identifying partitions of the feature space where the models demonstrate performance below some acceptable threshold. While there is extensive literature on the topics of model selection \citep{ding2018model}, ensembling \citep{dong_survey_2020}, and rejection learning \citep{hendrickx2021machine}, there lacks a comprehensive, holistic modeling framework that can provide clear, interpretable answers to the questions we have posed. 

In this work, we develop a prescriptive methodology for adaptively selecting the best model (or ensemble) from a collection of models to make a prediction for a given input or to reject predicting entirely if it is likely that all models will perform poorly. Our approach, which we refer to as Optimal Predictive-Policy Trees (OP\textsuperscript{2}Ts), is a tree-based method that partitions the feature space based on the relative performance of each constituent model. By constructing an interpretable policy model, we uncover sub-populations of the data where different models perform the best, and consequently, sub-populations where some (or all) models are likely to fail. With a simple extension, we are also able to consider a fixed set of model ensembles with different weights, providing an interpretable and adaptive scheme for model ensembling. Within this framework, we also learn optimal policies for rejection, parameterized by the tolerance for error in predictions. In this work, by \textbf{learning with rejection}, we are referring to the class of learning algorithms that, in addition to making predictions, have the option of not making a prediction. This includes algorithms that learn to reject based on the scores of an existing model \citep{chow_optimum_1970, bartlett_classification_2008}, as well as approaches that learn a predictor and rejector jointly \citep{ortner_learning_2016}. We show empirically on real-world datasets that our approach consistently yields better performance than any of the constituent models individually, including model ensembles.

Our work is motivated by overlapping gaps in the Mixture of Experts (ME) and Rejection Learning literature. The first is that model ensemble and ME approaches are generally not interpretable, with most recent developments focusing on deep neural network (DNN) models and assignment modules \citep{shazeer2017outrageously, riquelme2021scaling, masoudnia_mixture_2014}. They cannot tell the practitioner \textit{why} a certain set of weights was assigned to each of the constituent models, or why a certain model was selected. Recently, a framework referred to as the Interpretable Mixture of Experts (IME) was introduced in part to address this gap \citep{ismail_interpretable_2023}. However, the IME approach requires that all constituent models be differentiable (e.g. logistic regression, soft decision trees, DNNs) and that all models have accessible weights that can be tuned. In contrast, this work is motivated by the observation that in many real-world applications, practitioners already have a collection of models under consideration, some of which may not be differentiable. Further, there may be models that are differentiable but for privacy, cost, or licensing reasons, cannot be further trained. In these situations, such ME approaches \textbf{do not integrate with existing models}, and therefore become their own, separate modeling approach. The rise of publicly available APIs for large, multi-modal models (e.g. ChatGPT, PaLM) suggests that there will be an increasing desire to incorporate models that either cannot practically, or in some cases legally, be fine-tuned. 

As our methodology generates tree-based policies, it is also related to the hierarchical ME literature. This line of research originated with \cite{jordan_hierarchical_1994}, which introduces the notion of a soft decision tree that partitions the feature space using gating functions and weights the outputs of a collection of models that are trained jointly with the soft tree. While there have been a variety of papers that have built on this original work, such as \cite{solomatine_semi-optimal_2004} and \cite{irsoy_soft_2012}, the high-level concept of using soft splits and training a collection of experts jointly has not changed. However, existing hierarchical ME models rely on soft splits, which reduces their interpretability, especially if non-linear gating functions are used. This line of work also has not addressed the incorporation of learning with rejection.

Similar to the ME literature, there is also the assumption in recent rejection learning literature that the predictor can be optimized jointly with the rejector \citep{ortner_learning_2016, charoenphakdee_classification_2021}. Earlier literature that focused on learning a rejector for a pre-existing predictor, using the predictor's scores to create a performance-rejection trade-off curve, was either primarily concerned with learning a single predictor-rejector pair \citep{chow_optimum_1970, bartlett_classification_2008}, or considered only binary classification and did not learn a policy over the feature space \citep{provost_robust_2001}. In this work, we incorporate rejection learning with many constituent models, assuming only query access to these models. Further, since our approach yields interpretable policies, we can provide faithful descriptions of the contexts in which it is better to reject, and not use any of the available models. As we discuss in Section \ref{sec:results}, such interpretability can be practically useful when creating real-world policies for high-stakes decision-making with machine learning systems.

 As a motivating example, we point to recent work by \cite{boussioux_hurricane_2022}, which develops a multi-modal machine learning approach to hurricane forecasting. In their work, \cite{boussioux_hurricane_2022} develop a collection of models from different model classes. Specifically, they incorporate tree-based, neural network, and physics-based models, and demonstrate that combining the predictions of these models yields more accurate and robust predictions than traditional physics-based modeling alone. It is clear in this case that only a subset of the constituent models can be fine-tuned, and they are not all differentiable. Further, their work demonstrates the benefits of including constituent models from a variety of model classes, rather than enforcing that they all come from one class. In a similar vein, \cite{soenksen_integrated_2022} demonstrates the dramatic impact of multi-modality for prediction in the healthcare domain, relying on constituent models from a variety of model classes to handle each modality, and additionally shows that each modality has varying importance depending on the task. For any work similar to this, it is vital to have a model selection and rejection approach that relies solely on model outputs and does not assume the models can be further trained.

Motivated by these observations, the main contributions of our work are as follows:
\begin{enumerate}
    \item We develop a methodology to create interpretable policies for adaptive model selection and rejection. We do so while assuming only query access to the constituent models. Further, we develop approaches that work for both structured and unstructured data.
    \item We develop our method for both regression and classification tasks, and demonstrate empirically that our method is comparable with or improves over baseline approaches while yielding interpretable policies. We demonstrate how one can use the resulting policies to perform sub-group analysis, finding subsets of the data on which different models perform well.
    \item We incorporate parameterized rejection, which, within our interpretable framework, can identify settings where all available models are likely to be error-prone. We further explore and demonstrate how adding a rejection option can serve as a form of regularization that empirically yields additional tree stability.
    \item We offer theoretical insights on the conditions under which the proposed approach can learn a policy that is stronger than the best single model in hindsight, as well as alternative approaches that do not consider the relative rewards of each model.
\end{enumerate}

We begin by introducing the class of prescriptive decision trees we will use to learn our model selection policies and the global optimization approach we take to learn trees in this class in Section \ref{sec:opt}. In Section \ref{sec:oms-clf}, we develop our approach for adaptive model selection in the context of classification, before introducing the addition of a parameterized rejection option in Section \ref{sec:oms-clf-rej}. We then introduce the corresponding setup for regression in Section \ref{sec:oms-reg}. Following the development of our methods, we provide some theoretical justification for our approach in Section \ref{sec:theory}. Finally, in Section \ref{sec:results} we evaluate our methodology on a variety of real-world datasets and benchmark our approach against related, predictive techniques. We provide additional experiments and proofs in the Appendices.

\section{Optimal Policy Trees}\label{sec:opt}
At the center of our methodology is learning interpretable, tree-based policies that are capable of capturing nonlinear interactions and partitioning the feature space. To accomplish this, we base our approach on the Optimal Policy Tree (OPT) formulation, introduced in \cite{amram_optimal_2022}. The setup is as follows. Suppose we have some dataset $\{x_i\}_{i=1}^n$ of size $n$, where $x_i \in \mathbb{R}^d$ along with $m$ possible treatments $T = \{t_1, \dots, t_m\}$. Suppose we are given a reward for each observation $i$ and treatment $t$, denoted by $R_{it}$. In many cases, such as observational data in medicine, the counterfactuals, the reward for sample $i$ under each treatment, may not be known. In these settings, an additional reward estimation step is required. However, as we will demonstrate in Section \ref{sec:oms-clf}, in our approach, no reward estimation will be necessary. We would like to learn a function $f: \mathbb{R}^d \rightarrow T$ that prescribes a treatment for any input sample. Specifically, we will learn this function from the class of decision trees. Our objective is to maximize (or minimize) the total reward over all samples using the treatments prescribed by the decision tree. Assuming we are interested in maximizing the reward, our objective becomes
\begin{equation}
 \max_{\tau(\cdot)} \sum_{i=1}^n\sum_{t=1}^m \mathbbm{1}\{\tau(x_i) = t\}R_{it},
\end{equation}\label{eq:opt-base}
where we maximize over the class of decision trees $\tau$ up to some depth $D$ and $\tau(x_i)$ is the prescription made by the decision tree for sample $x_i$. To avoid overfitting, we add a penalty to the objective for tree complexity, measured by the number of splits in the tree $\tau$. The formulation then becomes
\begin{equation}
    \max_{\tau(\cdot)} \sum_{i=1}^n\sum_{t=1}^m \mathbbm{1}\{\tau(x_i) = t\}R_{it} + \lambda \cdot \text{numsplits}(\tau),
\end{equation}
where $\lambda \in \mathbb{R}_{+}$ is a parameter to control the complexity penalty. Focusing on the prescription in the leaf nodes, we introduce the leaf assignment function $v: \mathbb{R}^d \rightarrow L$ for a given tree $\tau$ with the set of leaf nodes $L$, where $|L| = q$. Denoting the prescriptions for each leaf by $\mathbf{z} = (z_1, \dots, z_q)$, we can rewrite our formulation as
\begin{equation}
        \max_{v, \mathbf{z}} \sum_{i=1}^n\sum_{l=1}^q \mathbbm{1}\{v(x_i) = l\}R_{iz_l} + \lambda \cdot \text{numsplits}(v).
\end{equation}
Notice that this problem is separable in the leaves, such that we can rewrite the formulation as
\begin{equation}
        \max_{v, \mathbf{z}} \sum_{l=1}^q \sum_{i : v(x_i) = l} R_{iz_l} + \lambda \cdot \text{numsplits}(v).    
\end{equation}
Then, for any given tree structure $\tau$ and corresponding leaf assignment function $v$, the optimal prescription in each leaf is given by
\begin{equation}
    z_l = \argmin_{t \in T} \sum_{i : v(x_i) = l} R_{it},
\end{equation}
which can be solved by enumerating the possible treatments. We add constraints for the maximum depth, $D$, of the tree and the minimum number of samples for each leaf, $c_{min}$. Together, the parameters $\lambda, c_{min}$ and $D$ help control overfitting. Putting this together, our final formulation can be written as
\begin{equation}
\begin{split}
    \max_{v, \mathbf{z}} & \quad \sum_{l=1}^q \sum_{i : v(x_i) = l} R_{iz_l} + \lambda \cdot \text{numsplits}(v) \\
    \text{s.t.} & \quad \text{depth}(v) \leq D,\\
         & \quad \sum_{i=1}^n \mathbbm{1}\{v(x_i) = l\} \geq c_{min} \quad \forall l.
\end{split}
\end{equation}
The tree $\tau$ is then optimized using the Optimal Trees framework given in Section 8.4 of \cite{bertsimas_machine_2019}. Specifically, rather than using a greedy heuristic, we use a global optimization approach, based on coordinate descent, to optimize the tree structure and the prescriptions jointly. The hyperparameters $D$ and $c_{min}$ are optimized using a traditional grid search over a discrete set of values, while $\lambda$ is optimized through a pruning approach that generates a sequence of trees and identifies the value of $\lambda$ that minimizes the validation loss. The problem posed in Eq. \eqref{eq:opt-base} was also formulated by \cite{zhou_offline_2018} and \cite{biggs_model_2021}, however, they both used greedy heuristics to fit their policy trees. In \cite{amram_optimal_2022}, the authors demonstrate that training policy trees using a global optimization methodology can yield significant performance and interpretability advantages over greedy heuristic approaches.

For this work, we focus on the class of parallel-split decision trees, but our formulation can be easily extended to hyperplane splits, as in \cite{bertsimas_optimal_2017}. We do this to maintain the focus of this work, and preserve interpretability. In addition, we found in preliminary experiments that hyperplane splits did not yield a significant change in performance for the datasets we evaluated. 

\section{Adaptive Model Selection for Classification and Regression}\label{sec:oms-clf}
In this section, we develop our Optimal Predictive-Policy Tree methodology for classification and regression tasks. For both settings, we introduce the option for parameterized rejection and provide an analysis of how this parameterization impacts the resulting policy model. 

\subsection{An Illustrative Example}
We begin by discussing an illustrative example of our OP\textsuperscript{2}T method. This is a hypothetical case that demonstrates what a practical application of our approach could look like. Following our motivating example given by \cite{boussioux_hurricane_2022}, let us consider the problem of forecasting hurricane severity. To simplify, let us assume we are interested in a standard binary classification task, using the input feature space to predict whether a hurricane will be a Category 3 or above upon landfall. Suppose we have access to tabular features $\mathcal{X}_{tab}$ providing the physical conditions of the hurricane, such as wind speed, pressure, and distance to land. In addition, suppose we have access to satellite images of the forming hurricane, which we denote by $\mathcal{X}_{img}$. Using this data and previous work in hurricane forecasting, suppose we create a collection of models, comprised of a logistic regression model, $h_{lr}$, and a boosted tree model, $h_{boost}$, fit on $\mathcal{X}_{tab}$ alone, a convolutional neural network (CNN), $h_{cnn}$, trained on $\mathcal{X}_{img}$, and a physics simulator based on the Navier-Stokes equations (NS), $h_{ns}$, which relies on both $\mathcal{X}_{img}$ and $\mathcal{X}_{tab}$ to specify the state and initial conditions. It is clear that in this setting, further fine-tuning all of the models will not be possible, or even desired.

In this setting, we have four constituent models, $\mathbf{h} = [h_{lr}, h_{boost}, h_{cnn}, h_{ns}]$ to choose from. In addition, we would like to consider ensembles of these four models. In our work, we will consider a finite set of ensemble weights. These ensemble weights are specified a priori. A reasonable example would be to include each individual model (i.e., an ensemble where the weight is placed entirely on one model), a mean ensemble, and pair-wise ensembles (averaging the predictions of a pair of models). Our goal is then to adaptively select the best model, or ensemble, for each input sample.

\begin{figure}
    \centering
    \includegraphics[width=0.8\linewidth]{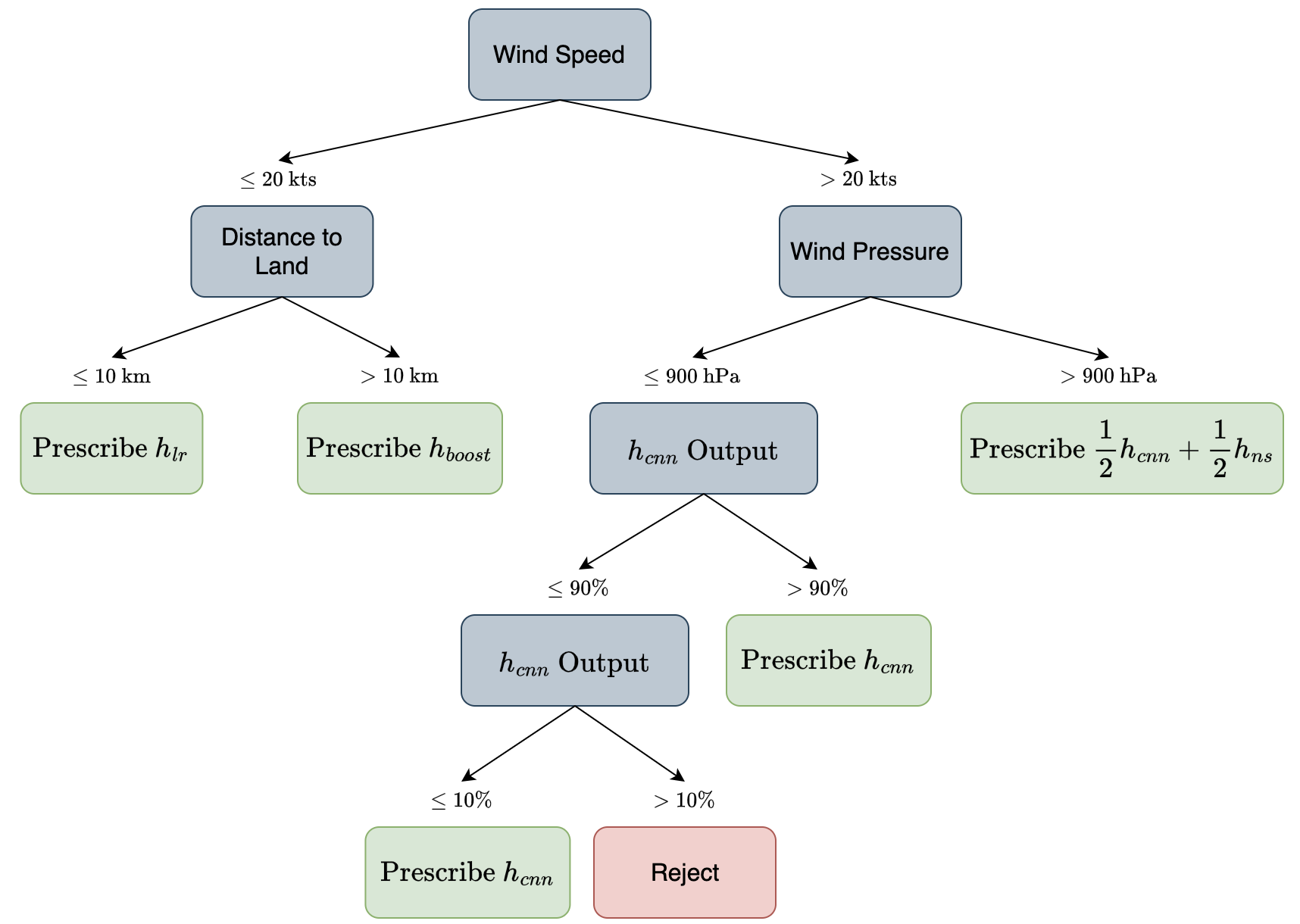}
    \caption{An illustrative example of the OP\textsuperscript{2}T approach to adaptive model selection.}
    %DB if cnn is less than 20% reject, if more than 20% propose an ensembe of 1/4 ineach model. As is now, is very akward. 
    \label{fig:hurricane}
\end{figure}

For each model or ensemble, we can compute a reward for using that model on each input sample. How we define this reward function is flexible, and we explore multiple options in the remainder of Section \ref{sec:oms-clf}. In addition, we can add an option to reject making a prediction entirely by creating a dummy model $h_r$ and assigning some reward for selecting this rejection model. With the models, ensemble weights, and rewards fixed, what remains is to specify the feature space for our policy model. For classification tasks, in addition to the original feature space, we may also want to use the model outputs themselves as features, as a measure of model confidence. In this case, we denote the model output space as $\mathcal{X}_{conf} = [0,1]^4$, representing the probability output of each of the four constituent models. In our approach, our policy function would then be an OPT fit on the feature space $\mathcal{Z} = \mathcal{X}_{tab} \cup \mathcal{X}_{conf}$, where the prescriptions are the set of ensemble weights and the dummy rejection model. The resulting policy tree could look like the example given in Figure \ref{fig:hurricane}. We can then use this tree as a routing function to send input samples to predictive models. Further, the resulting tree can provide answers to the motivating questions we posed. For this hypothetical example, we could answer these questions as follows.

\noindent \textbf{Should a model ensemble be used? If so, when?} Yes, when the wind speed is greater than $20 \text{kts}$ and the wind pressure is greater than $900 \text{hPa}$, it is best to use an ensemble of the CNN and the NS simulation.

\noindent \textbf{When are the models likely to be error-prone?} Under the combination of high wind speed and low wind pressure, if the CNN model is not confident, $h_{cnn}(x) \in (0.1, 0.9)$, then it is likely that all the models will be error-prone,  resulting in the tree prescribing rejection.

\noindent \textbf{Should an interpretable model be used?} Yes, when the wind speed is low and the hurricane is close to land, the linear regression model performs best. Otherwise, it is better to use a black-box model.

\subsection{Adaptive Model Selection for Classification} \label{subsec: Class}
In this section, we formalize our approach. We first consider classification tasks, where we have some input data $\{x_i\}_{i=1}^n$ from a feature space $\mathcal{X}$ and label data $\{y_i\}_{i=1}^n \in \mathcal{C}^n$, where $C = \{1,\dots, K\}$. We assume access to a collection of $m$ constituent models $\{h_1, \dots, h_m\}$, $h_i: \mathcal{X} \rightarrow \Delta^{K}$, where $\Delta^K$ is the unit simplex, and all models are fit on some training set. We denote by $\mathbf{h} = [h_1,\dots, h_m]$ the vectorized form of the constituent models such that $\mathbf{h}: \mathcal{X} \rightarrow \mathbb{R}^{m \times K}$. Let $\hat y_{ijk} = h_i(x_j)_k$, such that $\hat y_{ijk}$ is the output probability of class $k$ for sample $x_j$ from model $h_i$. Given a fixed, finite set of weights $\mathbf{W} \subset \Delta^m$ over the unit simplex, we want to learn a function $f: \mathcal{X} \rightarrow \mathbf{W}$ such that for any $x \in \mathcal{X}$, $f$ selects the best set of weights $\mathbf{w} \in \mathbf{W}$ such that the resulting model (or ensemble) $\mathbf{w}^T\mathbf{h}$ makes the best prediction on $x$. In the case that $\mathcal{X}$ is not some real space (e.g. unstructured language data), we instead want to learn our policy model $f$ over some real-valued side-information $\{z_i\}_{i=1}^n \in \mathcal{Z}^n$, where $\mathcal{Z}$ could be a subset of the features space $\mathcal{X}$ or separate data entirely, including the model outputs $\mathbf{h}(x)$ themselves. We can measure the quality of a model's prediction on a given sample, or the reward, in a few ways. For classification, we consider misclassification error and cross-entropy, defined as follows:
\begin{equation}
    R_{CE}(x_j, y_j, h_i) = \sum_{k=1}^K  \mathbbm{1} \{y_{j} = k\}\log(\hat y_{ijk}),
\end{equation}
\begin{equation}
    R_{MIS}(x_j, y_j, h_i) = \mathbbm{1}\{y_j = \argmax_{k}\hat y_{ijk}\}.
\end{equation}
We can equivalently define the rewards for a set of weights $\mathbf{w} \in \mathbf{W}$,
\begin{equation}
    R_{CE}(x_j, y_j, \mathbf{w}, \mathbf{h}) = \sum_{k=1}^K  \mathbbm{1} \{y_{j} = k\}\log((\mathbf{w}^T\mathbf{h}(x_j))_k),
\end{equation}
\begin{equation}
    R_{MIS}(x_j, y_j, \mathbf{w}, \mathbf{h}) = \mathbbm{1}\{y_j = \argmax_{k}(\mathbf{w}^T\mathbf{h}(x_j))_k\}.
\end{equation}
 We introduce both reward functions because they present inherent trade-offs. The cross-entropy reward is more informative than the misclassification reward. Making the reasonable assumption that the model predictions are fairly smooth over the feature space, $R_{CE}$ will also form a smoother reward surface than $R_{MIS}$. One risk with using the cross-entropy reward is that models that are over-confident, as is often the case with neural networks \citep{wang_rethinking_2021}, may be favored despite their ability to separate the data being similar to, or worse than, other models available. Since our policy prescribes models, rather than making class predictions directly, neither reward function requires committing to a threshold for class prediction a-priori. However, using $R_{MIS}$ requires selecting a prediction threshold for generating the rewards, which will impact the resulting tree structure and model prescriptions. As we discuss in Section \ref{sec:oms-clf-rej}, $R_{MIS}$ benefits from a simpler interpretation, as model prescriptions are based directly on minimizing misclassification error.

With these definitions, we can construct our reward matrix $\mathbf{R} \in \mathbb{R}^{n \times m}$ corresponding to the reward for using each model $h_i$ to predict the label for each sample $x_j$. We define our action space as the set of weights $\mathbf{W}$ defined above. Then, prescribing model $h_i$ corresponds to the action $\mathbf{e}_i = [0,\dots, 1, \dots, 0] \in \mathbf{W}$. Throughout this work, we assume $\mathbf{e}_i \in \mathbf{W}$ for all $i \in [m]$. That is, we can always take the action of prescribing an individual constituent model. As defined, any fixed ensemble of constituent models can be included in the action space $\mathbf{W}$. In Section \ref{sec:results}, we demonstrate this flexibility and show empirically that our approach uncovers partitions of the feature space where different ensemble weights are better suited.

We are now able to formulate our objective as a policy-learning problem, where the set of actions $\mathbf{W} = \{\mathbf{w}_1, \dots, \mathbf{w}_q\}$ is the set of weights over the constituent models that we can prescribe, our state space is $\mathcal{X}$ (or $\mathcal{Z}$), and the rewards for prescribing each model is given by our reward matrix $\mathbf{R}$. We propose learning an interpretable and adaptive policy model by using the Optimal Trees algorithm to produce a policy tree $T_{O}$ that learns to prescribe constituent models (or ensembles) given the input data by maximizing the reward $R$ over the class of decision trees. We fit $T_{O}$ using a validation dataset that is separate from the data used to train the constituent models. Throughout this work, we refer to this general approach as Optimal Predictive-Policy Trees (OP\textsuperscript{2}Ts). In the next section, we discuss a simple extension of this framework to incorporate a model rejection option. 
%% When can this setup be helpful?
% I think it's worth constructing some proposition (not really a theorem)
% That establishes sufficient (necessary) conditions under which the OMS method
% _can_ improve over the individual constituent models

\subsection{Classification with Rejection}\label{sec:oms-clf-rej}
Building on our approach in Section \ref{subsec: Class}, we  introduce a simple extension that allows us to incorporate rejection into this framework. In addition to the $m$ constituent models $h_1,\dots, h_m$, we now introduce a dummy rejection model $h_r$ that we add to the action space of our formulation. We must assign reward values $R(\cdot, h_r)$ so that our policy learning algorithm can compare the action of rejection against the action of prescribing the other models. One approach is to fix some constants $\boldsymbol{\alpha} = (\alpha_1,\dots,\alpha_K) \in [0,1)^K$ and suppose the output of the rejection model is always exactly within $\alpha_k$ of predicting the true label $k$. That is, for some input $x_j$ with label $y_j = k$, we define $\hat y_{rjk} = 1 - \alpha_k$. For example, in the binary classification setting, we define the output of the rejection model as
\begin{equation}
    h_r(x, y) = \mathbbm{1} \{y = 1\} (1 - \alpha_1) + \mathbbm{1}\{y=0\} \alpha_0.
\end{equation}
Therefore, if we were to set $\boldsymbol{\alpha} = (0.3, 0.2)$, then $h_r(x, y=1) = 0.7$ and $h_r(x, y=0) = 0.2$. In our experiments, we set $\boldsymbol{\alpha}$ as a constant vector such that $\alpha_i = \alpha$ for all $i \in [K]$ and generate OP\textsuperscript{2}Ts for a range of values for $\alpha$. In practice, one may wish to select a non-constant $\boldsymbol{\alpha}$ to characterize a model that has varying performance on different classes.
Notice that, if we assume $\alpha_k < \frac{1}{K}$ for all $k \in [K]$, $h_r$ is a consistent classifier. That is, for any sample $(x,y)$, we have $y = \argmax_i h_r(x)_i$. Therefore, to construct rewards for rejection, we assume that $h_r$ is \textbf{perfectly calibrated}, such that $P(y = k | h_r(x)_k = 1 - \alpha_k) = 1 - \alpha_k$. We can then define the reward as
\begin{equation}
    R_{CE}(x_j, y_j, h_r) = \sum_{k=1}^K  \mathbbm{1} \{y_{j} = k\}\log(1-\alpha_k),
\end{equation}
\begin{equation}
    R_{MIS}(x_j, y_j, h_r) = 1-\alpha_{y_j}.
\end{equation}
The misclassification reward for the rejection model can be interpreted as the expected reward. With the rewards defined for the rejection model, we can proceed with our approach as in Section \ref{sec:oms-clf}, with the action space extended to $\mathbf{\bar W} = \mathbf{W} \cup \{h_r\}$. Since our decision tree formulation prescribes the action that maximizes (or minimizes) the total reward in each leaf, using $R_{MIS}$, we can immediately conclude that rejection is prescribed in a leaf if and only if the accuracy of the constituent models is less than the accuracy of the rejection model parameterized by $\boldsymbol{\alpha}$. In contrast, due to the non-linear transformation of the logarithm in the reward function $R_{CE}$, which more severely punishes predictions that are far from the true label, the decision to reject does not have a clear interpretation in terms of a performance metric. In Appendix \ref{proof:class-rej-properties} we provide a result that gives the conditions for an OP\textsuperscript{2}T to prescribe rejection in a leaf under the $R_{CE}$ reward function. Further, as $\alpha_k \rightarrow 0$ for all $k \in [K]$, $T_{O}$ will converge to prescribing rejection always, assuming for each leaf $l$, there is at least one sample $x_i \in l$ and constituent model $h_j$ such that $\hat y_{ijy_i} < 1$. We also formalize this idea in Section \ref{sec:theory-rejection}. Since the reward function $R_{CE}$ is differentiable with respect to the rejection parameters $\boldsymbol{\alpha}$ and model outputs, we can also reason about how our prescriptions would change for a fixed tree. This enables straightforward sensitivity analysis of learned OP\textsuperscript{2}Ts with the $R_{CE}$ reward. In Appendix \ref{apdx:rej-intervals}, we develop a related, alternative approach that allows us to establish a connection between the rejection parameters and various metrics such as sensitivity and specificity in the case of binary classification.

%We could also use a different reward function, such as the absolute distance $|y_i - \hat y_{ijy_i}|$ that has a more clear interpretation. Practically, however, $R_{CE}$ more closely matches the desired behavior.

\subsection{Adaptive Model Selection for Regression}\label{sec:oms-reg}
In the context of regression, we assume we have some input data  $\{x_i\}_{i=1}^n \in \mathcal{X}^n$ and output data $\{y_i\}_{i=1}^n \in \mathbb{R}^n$.  We assume access to a collection of $m$ constituent regression models $\mathbf{h} = [h_1, \dots, h_m]$, such that $h_i : \mathcal{X} \rightarrow \mathbb{R}$. Similar to classification, given a fixed, finite set of weights $\mathbf{W} \subset \mathbb{R}^m$ we want to learn a function $f: \mathcal{X} \rightarrow \mathbf{W}$ such that for any $x \in \mathcal{X}$ (or $z \in \mathcal{Z}$), $f$ selects the best ensemble weights $\mathbf{w} \in \mathbf{W}$ such that $\mathbf{w}^T\mathbf{h}$ makes the best prediction on $x$. For this work, we measure the reward (or loss) using the squared error:
\begin{equation}
    R_{SE}(x_j, y_j, h_i) = (h_i(x_j) - y_j)^2,
\end{equation}
or equivalently for an ensemble $\mathbf{w} \in \mathbf{W}$,
\begin{equation}
    R_{SE}(x_j, y_j, \mathbf{w}, \mathbf{h}) = (\mathbf{w}^T\mathbf{h}(x_j) - y_j)^2.
\end{equation}
Again, for this work, we assume $\mathbf{e}_i \in \mathbf{W}$ for all $i \in [m]$. In this case, it is natural to frame the problem in terms of minimizing the reward. 
To our benefit, the reward function we define for regression is directly connected to the metrics used to evaluate regression models, namely mean squared error (MSE). That is, since our formulation uses total reward to select a model prescription in each leaf, the model that is prescribed in a given leaf is the one that has the minimum squared error over the samples that fall into that leaf. Minimizing the total reward also means that the resulting tree is directly optimizing to minimize the MSE. While we focus on squared error for this work, note that we could also easily measure reward in terms of absolute error if that was our metric of interest. 

Incorporating rejection learning for regression is simpler than in the case of classification. In this setting, we benefit from the reward function being directly connected to the evaluation metric of interest. Further, since the output of the models is one-dimensional, we only have to focus on a one-dimensional parameterization, as opposed to a $K$-dimensional parameterization for the multi-class classification setting with $K$ classes. We can introduce parameterized rejection by adding a dummy rejection model $h_r$ that is always a squared (or absolute) distance of $\alpha \in \mathbb{R}_{\geq 0}$ from any $x \in \mathcal{X}$ by fixing a constant reward
$$R(x_j, y_j, h_r) = \alpha.$$
 We treat $\alpha$ as our rejection threshold parameter. We then extend our action space to $\mathbf{\bar W} = \mathbf{W} \cup \{h_r\}$. With the rejection model added, we can again learn a policy tree $T_{O}$ that minimizes the reward. Then, for any leaf $l \in T_{O}$, the model prescribed in $l$ must have an MSE less than $\alpha$ over the data that falls into $l$. Otherwise, the leaf prescription will be to reject making a prediction. Therefore, rejection learning can be viewed as enforcing a maximum MSE over the training dataset. Similar to classification, for regression, as $\alpha \rightarrow 0$, we expect any policy model, including OP\textsuperscript{2}T, to converge to prescribing rejection always. We formalize this notion in Proposition \ref{prop:rejection-converge}.

\subsection{Alternative Approaches}\label{sec:alternative-approaches}
Having introduced our OP\textsuperscript{2}T approach, we now describe two alternative approaches, which we evaluate and compare against our method in Section \ref{sec:results}. Central to our work is the idea of using a prescriptive framework, measuring the relative reward of different models and ensembles, for predictive ML. It is natural then to ask what the benefit is of taking a prescriptive approach. A simpler alternative would be to instead treat the model prescription problem as a multi-class classification problem. In this setting, we have $M$ classes corresponding to each constituent model and ensemble, and each sample is labeled by the model or ensemble with the closest prediction to the true target. Formally, for models $[h_1,\dots,h_M]$, and data $\{(x_i,y_i)\}_{i=1}^n$, we assign sample $x_i$ a label $q_i \in [M]$ as
\begin{equation}
    q_i = \argmin_{i \in [M]}|y_i - h_i(x_i)|,
\end{equation}
where we break ties arbitrarily (e.g. smallest index). Then the equivalent of our approach in this setting would be a decision tree for classification, fit on the dataset $\{(x_i, q_i)\}_{i=1}^n$, or $\{(z_i, q_i)\}_{i=1}^n$ in case we are learning a policy over a modified feature space $\mathcal{Z}$. We refer to such a classification tree as a \textbf{Meta-Tree}. The Meta-Tree approach is a close comparison to our OP\textsuperscript{2}T method, as both learn over the same hypothesis space. That is, both approaches learn decision trees over the same feature space, with an equivalence between the $M$ classes and the prescriptions. The difference lies in how the tree is learned, and we demonstrate both theoretically in Section \ref{sec:theory} and empirically in Section \ref{sec:results} that this difference can lead to the resulting OP\textsuperscript{2}Ts outperforming the Meta-Trees. 

In addition to introducing a prescriptive framework to the problem of adaptive model selection, our approach yields interpretable policies that can be used to gain insights regarding existing predictive models. We therefore want to compare our approach against a related, black-box method. As we are focused on learning policies over structured data, it is natural to consider boosted trees as an alternative, black-box approach. Specifically, in the same vein as the Meta-Tree approach, we train boosted tree models, using XGBoost \citep{xgboost}, on the multi-class data $\{(x_i, q_i)\}_{i=1}^n$, or $\{(z_i, q_i)\}_{i=1}^n$. We refer to this approach as \textbf{Meta-XGB}. Since boosted tree models are more expressive than individual decision trees, this allows us to evaluate further the power of incorporating the relative rewards of each model and ensemble in a prescriptive framework. 

\section{Theoretical Insights}\label{sec:theory}
In this section, we formalize some of the concepts introduced in this work and provide theoretical insights for our approach. We discuss sufficient conditions under which OP\textsuperscript{2}T models can learn policies that improve over the best model in hindsight, the setting where OP\textsuperscript{2}Ts can outperform the Meta-Tree and Meta-XGB approaches, and the impact of rejection learning on the resulting policies. Throughout this section, we consider a fixed set of $m$ constituent models $\mathcal{H} = \{h_1, \dots, h_m\}$. We do not consider model ensembles separately, as any model ensemble can be represented as another constituent model, and this simplifies our notation. Our setup consists of an original feature space $\mathcal{X}$ over which the constituent models $\mathcal{H}$ are defined and a secondary feature space $\mathcal{Z}$ over which some policy model $\pi$ is defined.  We first introduce the following definition:
\begin{definition}
\textbf{(Informative Policy)} For some policy function $\pi: \mathcal{Z} \rightarrow \mathcal{H}$, function $g: \mathcal{X} \rightarrow \mathcal{Z}$, data distribution  $\mathcal{D}$ over $\mathcal{X}$, true target function $f^*: \mathcal{X} \rightarrow \mathcal{Y}$, and set of constituent models $H = [h_1,\dots,h_m]$, we refer to the policy as \textbf{informative} if
\begin{equation}
    E_{\mathcal{D}}[R(x, f^*(x), \pi(g(x)))] > \max_{h \in \mathcal{H}} E_{\mathcal{D}}[R(x, f^*(x), h)].
\end{equation}
\end{definition}
A policy is \textbf{informative} if it finds a meaningful partition of the space $\mathcal{Z}$ such that the expected reward under $\pi$ is greater than the expected reward for the best single constituent model. The function $g$ defines the relationship between samples in the original features space $\mathcal{X}$ and samples in the feature space $\mathcal{Z}$. This definition is helpful for reasoning about the potential effectiveness of adaptive model selection, regardless of the specific approach.

\subsection{Sufficient Conditions for Learning an Informative Policy}

Our first goal is to determine the conditions under which an informative policy exists. These conditions help us understand when an adaptive policy can outperform static model selection. Intuitively, an informative policy exists when there are separate regions of the feature space over which different models perform the best in terms of relative reward. To formalize this setting, we introduce the following definition:
\begin{definition}
    \textbf{(Dominated Subspace)} Let $\mathcal{Z} \subseteq \mathbb{R}^d$ be the feature space used to learn some policy function $\pi$, $\mathcal{D}$ some distribution over $\mathcal{X}$, $v: \mathcal{Z} \rightarrow 2^{\mathcal{X}}$ a function mapping elements of $\mathcal{Z}$ to disjoint subsets of the original feature space $\mathcal{X}$, and $f^*: \mathcal{X} \rightarrow \mathcal{Y}$ be the true target function. Suppose there are $m$ constituent models $\mathcal{H} = \{h_1,\dots,h_m\}$. Let us define the function $g: \mathcal{X} \rightarrow \mathcal{Z}$ as the unique function such that for all $x \in \mathcal{X}$, we have $x \in v(g(x))$. We refer to a connected subspace $A \subset \mathcal{Z}$ as a \textbf{dominated subspace} if $P_{\mathcal{D}}(\cup_{z \in A}v(z)) \in (0, 1)$ and there exists a model $h_A^* \in \mathcal{H}$ such that for all $h \in \mathcal{H} \setminus h_A^*$,  $$E_{\mathcal{D}}[R(x,f^*(x), h_A^*)\cdot \mathbbm{1}\{g(x) \in A\}] > E_{\mathcal{D}}[R(x, f^*(x), h)\cdot\mathbbm{1}\{g(x) \in A\}].$$ 
\end{definition}
 A \textbf{dominated subspace} is any connected subspace over which there is a single, best-performing model in expectation. Since we are reasoning about multiple reward surfaces jointly, it is natural to consider partitioning the feature space into these dominated subspaces. The functions $g$ and $v$ define the relationship between the feature spaces $\mathcal{X}$ and $\mathcal{Z}$. Notice that we assume the function $v$ maps to disjoint subsets of $\mathcal{X}$. That is, for all $z_1, z_2 \in \mathcal{Z}$ such that $z_1 \neq z_2$, $v(z_1) \cap v(z_2) = \emptyset$. This assumption gives $g$ the property of uniqueness. In many settings, this many-to-one relationship between $\mathcal{X}$ and $\mathcal{Z}$ is a reasonable assumption. For example, suppose $\mathcal{X}$ is unstructured language data (sequences of characters up to some maximum length) and $\mathcal{Z}$ is structured meta-data about this language data, such as the length of the sequence or specific character counts. Then an element $z \in \mathcal{Z}$ can map to many sequences in $\mathcal{X}$, but each sequence $x \in \mathcal{X}$ has one specific element in $\mathcal{Z}$ corresponding to its meta-data. Further, the elements of $\mathcal{Z}$ map to disjoint subsets of $\mathcal{X}$. Another setting with this property is when $\mathcal{Z}$ is the outputs of the constituent models $H$. With these definitions, we can then formulate the following proposition.
\begin{proposition}\label{prop:rewards}
     Let $\mathcal{Z} \subseteq \mathbb{R}^d$ be the feature space used to learn some policy function $\pi$, $\mathcal{D}$ some distribution over $\mathcal{X}$, a function $v: \mathcal{Z} \rightarrow 2^{\mathcal{X}}$ mapping elements of $\mathcal{Z}$ to disjoint subsets of the original feature space $\mathcal{X}$, and the unique function $g: \mathcal{X} \rightarrow \mathcal{Z}$ such that for all $x \in \mathcal{X}$, we have $x \in v(g(x))$. There exists an \textbf{informative policy} if there exist two disjoint dominated subspaces $A,B \subset \mathcal{Z}$ with corresponding models $h_A^*$ and $h_B^*$, such that $h_A^* \neq h_B^*$.
\end{proposition}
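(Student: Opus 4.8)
The plan is to \textbf{exhibit} an informative policy directly, built from the two dominated subspaces, and then verify the defining inequality by decomposing the expected reward over a partition of the feature space. Since the definition of an informative policy places no restriction on the policy class (it need not be a tree), it suffices to construct \emph{any} map $\pi : \mathcal{Z} \to \mathcal{H}$ that beats the best single model in hindsight.

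First I would let $h^\star \in \argmax_{h \in \mathcal{H}} E_{\mathcal{D}}[R(x, f^*(x), h)]$ be a best constituent model overall, and write $V^\star = E_{\mathcal{D}}[R(x, f^*(x), h^\star)]$. Define $\pi(z) = h_A^*$ for $z \in A$, $\pi(z) = h_B^*$ for $z \in B$, and $\pi(z) = h^\star$ otherwise; this is well defined because $A \cap B = \emptyset$. Because $g$ is the unique map with $x \in v(g(x))$ and the sets $\{v(z) : z \in \mathcal{Z}\}$ are pairwise disjoint, the three events $\{g(x) \in A\}$, $\{g(x) \in B\}$, and $\{g(x) \notin A \cup B\}$ partition $\mathcal{X}$. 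Hence both $E_{\mathcal{D}}[R(x, f^*(x), \pi(g(x)))]$ and $V^\star$ split into the corresponding three conditional contributions.

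Subtracting the two decompositions, the contributions over $\{g(x) \notin A \cup B\}$ coincide and cancel, giving
\begin{equation}
E_{\mathcal{D}}[R(x, f^*(x), \pi(g(x)))] - V^\star = \Delta_A + \Delta_B,
\end{equation}
where $\Delta_A = E_{\mathcal{D}}[(R(x, f^*(x), h_A^*) - R(x, f^*(x), h^\star))\,\mathbbm{1}\{g(x) \in A\}]$ and $\Delta_B$ is defined analogously over $B$. By the dominated-subspace hypothesis applied on $A$, $\Delta_A > 0$ whenever $h^\star \neq h_A^*$, and trivially $\Delta_A = 0$ when $h^\star = h_A^*$; the same holds for $\Delta_B$ with $B$. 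Since $h_A^* \neq h_B^*$, the model $h^\star$ cannot equal both, so at least one of $\Delta_A, \Delta_B$ is strictly positive and the other is nonnegative; therefore the right-hand side is strictly positive and $\pi$ is informative.

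I expect the only subtlety to be this final case split, specifically the scenario in which the global best model $h^\star$ happens to coincide with $h_A^*$ (or with $h_B^*$): then the entire improvement must come from the \emph{other} dominated subspace, and to keep the inequality strict one must use that its base measure is positive, $P_{\mathcal{D}}(\cup_{z \in B} v(z)) > 0$ (resp.\ over $A$), together with the strict inequality built into the definition of a dominated subspace. It is also worth remarking, though not needed here, that connectedness of $A$ and $B$ plays no role in this argument, and that the $\pi$ we build need not be realizable by a depth-bounded parallel-split tree; a version of the statement tailored to OPT would additionally require $A$ and $B$ to be carved out by axis-aligned splits.
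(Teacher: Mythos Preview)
Your proof is correct and follows essentially the same constructive approach as the paper: define a piecewise policy using $h_A^*$ on $A$, $h_B^*$ on $B$, and a fallback model elsewhere, then split the expected reward over the resulting partition and exploit that $h^\star$ cannot coincide with both $h_A^*$ and $h_B^*$. The only difference is cosmetic: you use the global best model $h^\star$ on the complement $C = \mathcal{Z}\setminus(A\cup B)$, which makes the $C$-contribution cancel exactly, whereas the paper uses the locally optimal $h_C^* = \argmax_h E_{\mathcal{D}}[R(x,f^*(x),h)\,\mathbbm{1}\{g(x)\in C\}]$ and handles the empty/measure-zero case for $C$ separately---your choice is a mild simplification but the argument is the same.
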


The proof is given in Appendix \ref{proof:rewards}. 
% Note that the statement does not necessarily hold in the other direction. In general, we do not need the condition that $A$ and $B$ are connected spaces. However, in the context of tree-based policies, this condition is important.
This proposition gives us clear sufficient conditions for the existence of an informative policy. Namely, given some assumptions, if there exist at least two disjoint dominated subspaces in $\mathcal{Z}$ corresponding to different constituent models, then an informative policy exists.
In addition to this result, it is clear that if there exists a model $h^* \in H$ such that $R(x, f^*(x), h^*) \geq \max_{h \in H} R(x, f^*(x), h)$ for all $x \in \mathcal{X}$, then an informative policy cannot be found, as one model strictly dominates the rest. 

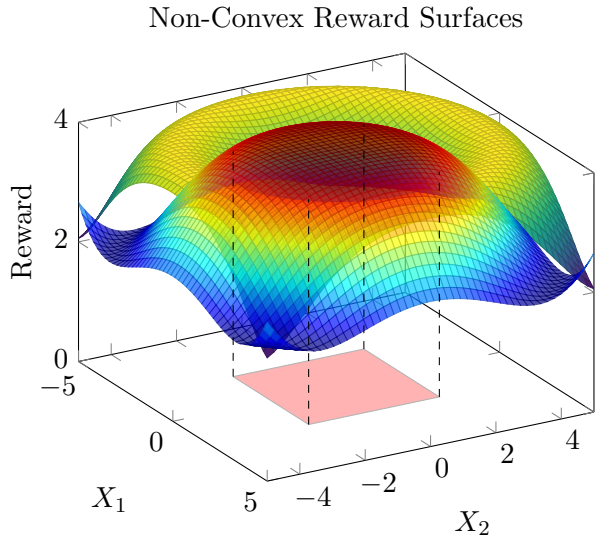
\begin{figure}[h!]
\begin{center}
\begin{tikzpicture}
  \begin{axis}[
    view={60}{30},
    xlabel=$X_1$, ylabel=$X_2$, zlabel=Reward,
    title={Non-Convex Reward Surfaces},
    legend style={at={(1.05,1)},anchor=north west},
    zmin=0, zmax=4
  ]
    % First non-convex surface
    \addplot3[
      surf,
      colormap/viridis,
      domain=-5:5,
      samples=50,
      opacity=1
    ]
    {sin(5*(x^2 + y^2))+3};
    % \addlegendentry{$z = \sin(5(x^2 + y^2))+3$}

    % Second non-convex surface
    \addplot3[
      surf,
      colormap/jet,
      domain=-5:5,
      samples=50,
      opacity=0.7
    ]
    {cos(5*(x^2 + y^2))+3};
    % \addlegendentry{$z = \cos(5(x^2 + y^2))+3$}

   % Highlight the region in the xy-plane where the first surface is greater than the second
    \addplot3 [
      samples=50,
      domain=-5:5,
      opacity=0.3,
      fill=red
    ] 
    coordinates {
      (-2,-2,0) (2,-2,0) (2,2,0) (-2,2,0) (-2,-2,0)
    };
    % Dashed vertical lines from the corners
     Dashed vertical lines from the corners
    \addplot3 [
      dashed,
      color=black,
    ] coordinates {
      (-2, -2, 0) (-2, -2, {cos(40)+3})
    };
    \addplot3 [
      dashed,
      color=black,
    ] coordinates {
      (2, -2, 0) (2, -2, {cos(40)+3})
    };
    \addplot3 [
      dashed,
      color=black,
    ] coordinates {
      (2, 2, 0) (2, 2, {cos(40)+3})
    };
    \addplot3 [
      dashed,
      color=black,
    ] coordinates {
      (-2, 2, 0) (-2, 2, {cos(40)+3})
    };
  \end{axis}
\end{tikzpicture}
\end{center}
    \caption{An example of two non-convex reward surfaces with a square dominated subspace.}
    \label{fig:dominated-subspace}
\end{figure}

We can apply this result to reason about the quality of OP\textsuperscript{2}T models. For example, when $\mathcal{Z} = \mathbb{R}$, connected subspaces are intervals $(a, b)$. Any such interval can be expressed with a depth-2 tree using parallel splits. Further, any pair of disjoint intervals, $(a,b)$ and $(c,d)$, can be expressed with a depth-3 tree using parallel splits. Therefore, it follows from Proposition \ref{prop:rewards} that if there exist two disjoint dominated subspaces in $\mathcal{Z} = \mathbb{R}$, an informative policy can be learned by a depth-3 OP\textsuperscript{2}T. More generally, it follows that if there exist two disjoint dominated subspaces that can be formed by parallel splits of a depth $D_{max}$ decision tree, then OP\textsuperscript{2}Ts can recover an informative policy. Note that the reward surfaces can be highly nonlinear as long as there exist dominated subspaces that can be defined by parallel splits generating an informative policy. For example, in Figure \ref{fig:dominated-subspace}, we show two non-convex reward surfaces, $f_1(X_1,X_2) = \sin(5(X_1^2+X_2^2))+3$ and $f_2(X_1,X_2) = \cos(5(X_1^2+X_2^2))+3$. Despite the surfaces being non-convex, there exists a dominated subspace that can be constructed by a depth-4 decision tree with axis-aligned splits.

\subsection{The Advantage of Prescription over Prediction}\label{sec:theory-prescription}
So far, we have reasoned about the combined reward surfaces for the constituent models and conditions for learning an informative policy with decision trees. The next question is then, what is the benefit of our prescriptive approach? In Section \ref{sec:alternative-approaches}, we introduce two alternative approaches which we refer to as Meta-Tree and Meta-XGB. These approaches do not consider the relative rewards of the constituent models. Instead, the problem is framed as a multi-class classification task, where each class corresponds to a different model or ensemble. Specifically, for models and ensembles $[h_1,\dots,h_M]$, and data $\{(x_i,y_i)\}_{i=1}^n$, we assign sample $x_i$ a label $q_i \in [M]$ as
\begin{equation}
    q_i = \argmin_{i \in [M]}|y_i - h_i(x_i)|,
\end{equation}
where we break ties arbitrarily (e.g. smallest index). These methods are simpler to implement, so it is important to quantify the advantage of using our prescriptive approach instead. To answer this question, we have the following result. 

\begin{proposition}\label{prop-op2t-vs-meta-tree}
    Under reward maximization, suppose we have a set of constituent models $\mathcal{H}=\{h_1,\dots,h_m\}$, data $\{(x_i, y_i)\}_{i=1}^n \in \mathcal{X}^n$, function $g: \mathcal{X} \rightarrow \mathcal{Z}$, and a reward function $R: \mathcal{X} \times \mathcal{Y} \times \mathcal{H} \rightarrow \mathbb{R}$ with either no finite lower bound or upper bound. Then the difference in total reward, $\sum_{i=1}^n R(x_i, y_i, T_O(g(x_i)) - \sum_{i=1}^n R(x_i, y_i, T_M(g(x_i))$, between the corresponding OP\textsuperscript{2}T, $T_O$, and a Meta-Tree, $T_M$, \textbf{can be arbitrarily large}. For a fixed dataset, if we define $R_{max} = \max_{i \in [n], (j,k) \in [m]} |R(x_i, y_i, h_j) - R(x_i, y_i, h_k)|$, we have the upper bound
    $$\frac{1}{n}\sum_{i=1}^n R(x_i, y_i, T_O(g(x_i)) - \frac{1}{n}\sum_{i=1}^n R(x_i, y_i, T_M(g(x_i)) \leq \frac{m-1}{m}R_{max}.$$
    The same result holds for the corresponding Meta-XGB model.
\end{proposition}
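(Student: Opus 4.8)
The plan is to handle the two halves of the statement separately: first the claim that the gap can be arbitrarily large, which is a short adversarial construction, and then the per-sample upper bound $\frac{m-1}{m}R_{max}$, which follows from a pigeonhole argument applied leaf-by-leaf to the Meta-Tree.

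For the arbitrarily-large part, I would build a worst-case instance in which the trees are forced to be trivial, so the comparison reduces to a single global prescription. Take $m=2$, let the side information $g(x_i)$ be constant across all $n$ samples (so no split can separate any pair of points), and partition the data into a strict majority block $B_1$ of $k > n/2$ points on which $h_1$ is best by a negligible margin $\epsilon$, and a minority block $B_2$ of $n-k$ points on which $h_2$ is best by a huge margin $M$; using whichever of the two hypotheses holds, $M$ is realized either by making $R(\cdot,h_2)$ very large on $B_2$ (no finite upper bound) or by making $R(\cdot,h_1)$ very negative on $B_2$ (no finite lower bound). The Meta-Tree is a classification tree on $\{(g(x_i),q_i)\}$ with $q_i\in\argmax_{j}R(x_i,y_i,h_j)$; with a single leaf it outputs the plurality label, namely $h_1$, for total reward of order $k\epsilon$ (resp. $-(n-k)M$). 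The OP\textsuperscript{2}T maximizes total reward in its single leaf, so it prescribes $h_2$ as soon as $(n-k)M > k\epsilon$, for total reward of order $(n-k)M$ (resp. $-k\epsilon$). The difference is $(n-k)M - k\epsilon \to \infty$ as $M\to\infty$. The same instance handles Meta-XGB: with a constant input feature XGBoost has nothing to split on and likewise predicts the plurality class $h_1$.

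For the upper bound, fix any dataset; note the argument only uses the leaf structure of $T_M$, so $T_O$ may be any policy $\mathcal{Z}\to\mathcal{H}$. Let $R^\star = \sum_{i=1}^n \max_{j\in[m]} R(x_i,y_i,h_j)$ be the reward of the unrestricted oracle. Trivially $\sum_i R(x_i,y_i,T_O(g(x_i))) \le R^\star$. For $T_M$: in each leaf $\ell$ with sample set $S_\ell$, the tree outputs the plurality label $q_\ell$, and since there are at most $m$ labels, pigeonhole gives $|\{i\in S_\ell : q_i = q_\ell\}| \ge |S_\ell|/m$. On those samples $T_M$ attains exactly $\max_j R(x_i,y_i,h_j)$; on the remaining at most $\frac{m-1}{m}|S_\ell|$ samples it loses at most $R_{max}$ per sample by the definition of $R_{max}$. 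Summing over all leaves, $\sum_i R(x_i,y_i,T_M(g(x_i))) \ge R^\star - \frac{m-1}{m}\,n\,R_{max}$. Subtracting the two inequalities and dividing by $n$ gives the claimed bound. For Meta-XGB the same computation applies with "leaves of $T_M$" replaced by the cells of the common refinement of the trees in the ensemble, on each of which the Meta-XGB prediction is constant; the only extra ingredient needed is that this prediction is the plurality label among the training points in the cell.

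The main obstacle is exactly this last point for Meta-XGB: unlike a single $0$-$1$-loss-optimal classification tree, a boosted ensemble need not predict the plurality class on every cell of its refined partition (regularization or shallow base learners can let a neighboring class "bleed" in), and when that fails the pigeonhole step yields only the weaker, still valid, bound $R_{max}$ rather than $\frac{m-1}{m}R_{max}$. I would resolve this either by stating the sharp bound for Meta-XGB under the regime in which the cell-wise plurality property holds, or by recording the unconditional $R_{max}$ bound for Meta-XGB alongside the sharper $\frac{m-1}{m}R_{max}$ bound for the Meta-Tree. A secondary point to write out carefully is the first construction under the "no finite lower bound" hypothesis, where the unbounded magnitude must be routed through $R(\cdot,h_1)$ on $B_2$ rather than $R(\cdot,h_2)$, and one should verify that $T_O$ still strictly prefers $h_2$; this is routine but worth doing for both cases.
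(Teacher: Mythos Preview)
Your proposal is correct. The unboundedness construction is essentially the paper's: both force a non-separable block (you via constant $g$, the paper via an assumed non-separable subset $A$) in which the majority-label model is dominated in total reward by the minority-label model, and then send the reward gap to infinity using whichever unboundedness hypothesis is available.

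For the upper bound you take a different, more direct route. The paper argues by contradiction at the global level: if the average gap exceeded $\frac{m-1}{m}R_{max}$, then $T_M$ would misclassify more than $\frac{n(m-1)}{m}$ of the $q_i$'s, and hence be beaten by the depth-zero tree that predicts the global plurality class, contradicting the assumption that $T_M$ is a misclassification-minimizing tree. Your argument instead sandwiches both trees against the unconstrained oracle $R^\star$ and applies pigeonhole leaf-by-leaf, using only that each leaf outputs its own plurality label. This buys you a cleaner per-leaf bound and avoids appealing to any global optimality of $T_M$; the paper's version, on the other hand, needs only the weaker premise that $T_M$ is at least as good a classifier as the constant tree. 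Both are short and valid.

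Your caution about Meta-XGB is well placed and in fact sharper than the paper: the paper simply asserts ``the same argument holds'' for Meta-XGB without addressing whether a boosted ensemble outputs the plurality label on each cell of its refined partition. Your suggested resolution---either state the $\frac{m-1}{m}R_{max}$ bound under a cell-wise plurality assumption, or fall back to the unconditional $R_{max}$ bound---is the honest way to handle it.
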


 The proof is given Appendix \ref{appx-op2t-vs-meta-tree}. The key insight from the proof is that the OP\textsuperscript{2}T approach can significantly outperform the Meta-Tree approach when the data $\{(x_i, q_i)\}_{i=1}^n$ is not separable such that we are in the non-realizable setting with the decision tree hypothesis class. In these settings, a trade-off must be made between model prescriptions. The OP\textsuperscript{2}T approach considers more information regarding the relative reward of each constituent model, resulting in a prescription that can be arbitrarily better than the one made by the Meta-Tree approach. Note that the condition that $R$ must have either no finite lower bound or upper bound is satisfied by both $R_{CE}$ and $R_{SE}$. Importantly, and more practically, the difference in expected reward between the OP\textsuperscript{2}T approach and the alternative approaches is bounded by the magnitude of the relative difference in rewards between the constituent models. When the relative difference in reward between the models and ensembles is small, we can expect minimal differences between the approaches. However, in the non-realizable setting, for large differences in reward, we expect the OP\textsuperscript{2}T to outperform the alternative approaches. While the same results hold for Meta-XGB, we note that boosted trees are a more expressive model class, such that they can separate data better than decision trees. Therefore, we expect a smaller gap between OP\textsuperscript{2}Ts and the Meta-XGB approach in terms of total reward. We give an example highlighting this phenomenon in Section \ref{sec:projectile}, in addition to showing a similar result on a 1-D toy dataset in Section \ref{sec:synthetic}.

\subsection{The Quality of OP\textsuperscript{2}T Policies and the Impact of Rejection Learning}\label{sec:theory-rejection}
In this section, we investigate the impact of rejection learning on adaptive model selection. We do so through the lens of counting the number of dominated subspaces. Intuitively, as the number of regions with different dominant models increases, more expressive policy functions are required to partition the feature space accordingly. Conversely, rejection learning can serve as a mechanism to decrease the number of dominated subspaces, specifically those with low expected reward. To help reason about the number of dominated subspaces, we introduce the following definition.
\begin{definition}
    \textbf{(Maximal Dominated Subspace)} A dominated subspace $A$ is a maximal dominated subspace if for all dominated subspaces $B$ such that $A \subset B$, $h^*_A \neq h^*_B$.
\end{definition}
It is important to distinguish these dominated subspaces for the purpose of counting, as many dominated subspaces, such as an interval $(a, b) \subset \mathbb{R}$, $a < b$, may imply the existence of infinite, nested dominated subspaces, all corresponding to the same constituent model. Even with these conditions, there may also exist many, or even infinite, maximal dominated subspaces. In the example depicted in Figure \ref{fig:dominated-subspace}, if we take the feature space to be all of $\mathbb{R}^2$, there are infinite maximal dominated subspaces. Practically, we may not expect such a case to occur in reality, but given the likely non-convexity of reward surfaces, it is possible that there are many maximal dominated subspaces. Given that an informative policy exists, the quality of our tree-based approach depends on the number of maximal dominated subspaces and the shape and dimension of these subspaces. In the case that the set of maximal dominated subspaces, $S$, is small (i.e., $|S| << 2^{D_{max}}$) and each subspace $s \in S$ is representable by $D_{max}$ parallel splits, an OP\textsuperscript{2}T model of depth $D_{max}$ could learn a policy that approximately recovers all of these subspaces. Conversely, if there are many maximal dominated subspaces that are high dimensional and nonlinear, there will be a gap in expected reward between the policy learned by an OP\textsuperscript{2}T and the true optimal policy function. We note, however, that extending our formulation to include hyperplane splits is simple, and we implemented this version of the model, although empirically we did not see a benefit. These hyperplanes splits would allow the OP\textsuperscript{2}Ts to represent more complex dominated subspaces with fewer splits.

The inclusion of a rejection model can simplify the task of learning informative policies while adding a form of regularization.
Formally, we make the following statement:
\begin{proposition}
    Under reward maximization, given some data distribution $\mathcal{D}$ over $\mathcal{X}$ and reward function $R: \mathcal{X} \times \mathcal{Y} \times \mathcal{H} \rightarrow \mathbb{R}$ with finite upper bound $R_{ub}$ such that $R(\cdot) \leq R_{ub}$, and constituent models $H = \{h_1,\dots,h_m\}$, $h_i: \mathcal{X} \rightarrow \mathcal{Y}$, and true target function $f^*: \mathcal{X} \rightarrow \mathcal{Y}$, suppose there exist finite maximal dominated subspaces. Further, let us define a rejection model $h_r^n$ such that $R(x,y, h_r^n) = \alpha_n$.  We denote the set of maximal dominated subspaces, not including the regions dominated by the rejection model $h_r^n$, by $S^n = \{s_1^n,\dots, s_q^n\}$. Given an increasing sequence of rejection parameters $\{\alpha_n\}$ such that $\alpha_n \rightarrow R_{ub}$, we have the following properties:
    \begin{enumerate}
        \item If $P_D(R(x,f^*(x), h_i) = R_{ub}) = 0$ for all $i \in [m]$, then $|S^n| \rightarrow 0$ as $n \rightarrow \infty$.
        \item For all dominated regions $s_i^n$, $s_j^{n+1}$ such that $s_j^{n+1} \subseteq s_i^{n}$, we have $P_D(s_i^n) \geq P_D(s_j^{n+1})$ and $E_D[R(x,f^*(x),h^*_{s_j^{n+1}}) | x \in s_j^{n+1}] \geq E_D[R(x,f^*(x),h^*_{s_j^n}) | x \in s_j^{n+1}]$.
    \end{enumerate}
    
\end{proposition}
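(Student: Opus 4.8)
The plan is to reduce both claims to one elementary fact about the rejection model, then to handle Part~2 as bookkeeping and Part~1 by a Markov-type limiting argument. The fact is that since $R(x,f^*(x),h_r^n)\equiv\alpha_n$ is constant, for any measurable $A\subseteq\mathcal X$ with $P_{\mathcal D}(A)>0$ and any constituent model $h$ one has $E_{\mathcal D}[R(x,f^*(x),h)\mathbbm{1}\{x\in A\}]>E_{\mathcal D}[R(x,f^*(x),h_r^n)\mathbbm{1}\{x\in A\}]$ if and only if $E_{\mathcal D}[R(x,f^*(x),h)\mid x\in A]>\alpha_n$. Consequently, whenever $A$ is a maximal dominated subspace whose dominant model $h_A^*$ is a constituent model (that is, $A\in S^n$), then $h_A^*$ in particular beats $h_r^n$ over $A$, so $E_{\mathcal D}[R(x,f^*(x),h_A^*)\mid x\in A]>\alpha_n$; I would state this as a short lemma, since it drives everything.

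For Part~2, I would fix a pair $s_j^{n+1}\subseteq s_i^n$ with $s_i^n\in S^n$ and $s_j^{n+1}\in S^{n+1}$. The inequality $P_{\mathcal D}(s_i^n)\ge P_{\mathcal D}(s_j^{n+1})$ is just monotonicity of $P_{\mathcal D}$. For the reward inequality, $s_j^{n+1}$ is a dominated subspace relative to $\mathcal H\cup\{h_r^{n+1}\}$, so $h^*_{s_j^{n+1}}$ dominates over $s_j^{n+1}$ every other model in that set, hence every constituent model; and since $s_i^n\in S^n$, its dominant model $h^*_{s_i^n}$ is itself a constituent model, so $E_{\mathcal D}[R(x,f^*(x),h^*_{s_j^{n+1}})\mathbbm{1}\{x\in s_j^{n+1}\}]\ge E_{\mathcal D}[R(x,f^*(x),h^*_{s_i^n})\mathbbm{1}\{x\in s_j^{n+1}\}]$ (with equality if the two models coincide), and dividing by $P_{\mathcal D}(s_j^{n+1})>0$ gives the claim.

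For Part~1, I would show that any sequence of regions drawn from the $S^n$ has $\mathcal D$-probability tending to $0$, which, since the notation $S^n=\{s_1^n,\dots,s_q^n\}$ fixes a common bound $q$ on $|S^n|$, yields $P_{\mathcal D}\big(\bigcup_i s_i^n\big)\to 0$ — the precise sense in which the non-rejection part of the partition, and hence $|S^n|$, goes to zero. Given $s^n\in S^n$, the lemma gives $E_{\mathcal D}[(R_{ub}-R(x,f^*(x),h^*_{s^n}))\mathbbm{1}\{x\in s^n\}]\le (R_{ub}-\alpha_n)P_{\mathcal D}(s^n)\le R_{ub}-\alpha_n$, with nonnegative integrand because $R\le R_{ub}$. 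Pigeonholing over the finitely many constituent models, I would pass to a subsequence on which the dominant model is a fixed $h^*$; then for every $\varepsilon>0$, Markov's inequality gives $P_{\mathcal D}\big(s^n\cap\{R_{ub}-R(x,f^*(x),h^*)>\varepsilon\}\big)\le (R_{ub}-\alpha_n)/\varepsilon\to 0$, whence $\limsup_n P_{\mathcal D}(s^n)\le P_{\mathcal D}\big(R(x,f^*(x),h^*)\ge R_{ub}-\varepsilon\big)$. Letting $\varepsilon\downarrow 0$ and invoking the hypothesis $P_{\mathcal D}(R(x,f^*(x),h_i)=R_{ub})=0$ for all $i$ — this is exactly where that hypothesis enters — the right-hand side tends to $0$, so $P_{\mathcal D}(s^n)\to 0$; applying this with $s^n$ taken to be the largest-mass element of $S^n$ completes the argument.

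The step I expect to demand the most care is interpreting Part~1's conclusion correctly: the raw count $|S^n|$ need not be monotone (raising $\alpha_n$ can split one dominated subspace into several, each surviving while tiny), so the honest statement is that the $\mathcal D$-mass carried by $\bigcup_i s_i^n$ vanishes; writing this precisely, and being explicit about where the finiteness hypothesis and the common bound $q$ are invoked, is the delicate part, whereas the Markov estimate and the Part~2 identities are routine once the lemma is in hand.
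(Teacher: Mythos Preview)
Your argument for Part~2 is essentially identical to the paper's: the probability inequality is monotonicity of $P_{\mathcal D}$, and the reward inequality follows because $h^*_{s_i^n}$ is a constituent model (since $s_i^n\in S^n$), hence is among the models that $h^*_{s_j^{n+1}}$ dominates on $s_j^{n+1}$.

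For Part~1 you take a different and more careful route than the paper. The paper simply fixes an arbitrary connected $A$ with $P_{\mathcal D}(A)>0$, observes that $E_{\mathcal D}[R(x,f^*(x),h_i)\mid x\in A]<R_{ub}$ under the hypothesis, and concludes that for large enough $n$ the rejection model dominates every constituent model on $A$; it then says this applies to any maximal dominated subspace and stops. Your Markov-inequality argument is genuinely stronger: by bounding $E_{\mathcal D}[(R_{ub}-R)\mathbbm{1}\{x\in s^n\}]\le R_{ub}-\alpha_n$ and pigeonholing over the $m$ constituent models, you get a quantitative estimate that handles the fact that the subspaces $s^n$ themselves move with $n$, a dependence the paper's fixed-$A$ argument does not explicitly address. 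You also correctly flag that the literal count $|S^n|$ need not be monotone and that the cleanest conclusion is $P_{\mathcal D}\big(\bigcup_i s_i^n\big)\to 0$; the paper does not discuss this subtlety. What the paper's approach buys is brevity; what yours buys is a proof that is robust to the subspaces changing with $n$ and that makes precise the sense in which the non-rejection region vanishes.
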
\label{prop:rejection-converge}
The proof of these properties is brief and is given in Appendix \ref{appx-rejection-properties}. These properties give us a sense of the impact of rejection, specifically that increasing the reward for rejection reduces the total number of maximal dominated subspaces in the limit while also making these subspaces smaller and increasing their expected reward. For example, consider a pair of models that generate reward surfaces such that there are many maximal dominated subspaces, far greater than can be partitioned by a tree of any reasonable depth. By increasing the reward for rejection, we will likely reduce the number of maximal dominated subspaces, allowing the tree to focus on identifying fewer, significant partitions with high expected reward.

\section{Experiments}\label{sec:results}
We evaluate our methodology on a variety of real-world datasets with both structured and unstructured data, including regression and classification tasks. For all experiments, we perform a hyperparameter search over the tree complexity penalty $\lambda$, max depth $D_{max}$, and minimum number of samples per leaf $c_{min}$. To preserve interpretability, we limit the max depth to at most $D_{max}=10$. We then fit our OP\textsuperscript{2}T models via $k$-fold cross validation (for $k \in \{3,5\}$).

\subsection{Benchmarks}
To evaluate our approach, we benchmark against two other tree-based methods. Namely, we frame the problem of selecting a constituent model (or ensemble) as a multi-class classification problem, where each sample is assigned a label corresponding to the best-performing model on that sample, measured by the absolute distance between the model score and the true target. We defined this formulation in Section \ref{sec:theory}. With each sample labeled with a constituent model, we then fit a standard CART decision tree, as in \cite{breiman_classification_1984}, in addition to a boosted tree using XGBoost, on this dataset. For both approaches, we perform hyperparameter tuning over the max depth and the minimum number of samples per leaf. In addition, for the XGBoost models, we tune the total number of estimators. We refer to the CART-based method as Meta-Tree, and the XGBoost method as Meta-XGB. We chose these as benchmark models to investigate the impact of taking a prescriptive approach to this problem. Further, the Meta-Tree comes from the same model class as our OP\textsuperscript{2}Ts, the set of parallel-split decision trees, and share the same property of interpretability. The Meta-XGB model further allows us to evaluate the potential impact of taking a black-box, boosting approach. 
% Where applicable, we compare our OP\textsuperscript{2}T approach against a neural network approach, which we refer to as Neural Model Selection and Rejection (Neural MSR). For this, we create and tune MLP models with $3$ hidden layers of size $[16,32,16]$ respectively, and use the ReLU activation function. In addition, we apply batch normalization across each of the hidden layers \cite{ioffe_batch_2015}. The Neural MSR model learns over the same feature space $\mathcal{Z}$ as the OP\textsuperscript{2}T model and outputs a set of scores over the constituent models, along with an additional score corresponding to rejection. The rewards for each model are defined the same as in the OP\textsuperscript{2}T approach, and the loss function is given by
% \begin{equation}
%     L(z,x, y) = \sum_{i=1}^{m+1} \sigma(f_i(z))R(h_i, x, y)
% \end{equation}
% where $f_i(z)$ is the output score of the Neural MSR for constituent model $i$ for sample $z$. The network is then trained using stochastic gradient descent on the same validation set as the OP\textsuperscript{2}T model. In addition to the Neural MSR benchmark, we also evaluate a predictive approach, in which we label each sample by the model that achieves the best reward on that sample, and learn an XGBoost classifier to predict the label (corresponding to a model) for any sample. We refer to this approach as Meta-XGB.

\subsection{A Toy 1-D Example}\label{sec:synthetic}
To validate our approach and demonstrate learning tree-based policies and the effects of varying rejection thresholds, we provide a simple synthetic experiment with 1-dimensional data. Specifically, we assume $\mathcal{X} = \mathbb{R}$, and there are two models, $M_1$ and $M_2$ with reward functions $R(M_1, x, y) = \exp(-\frac{1}{2}(x-4)^2)$ and  $R(M_2, x, y) = \exp(-\frac{1}{2}(x-8)^2)$. We then draw $n=500$ samples uniformly over the interval $[0,12]$ and fit our OP\textsuperscript{2}T model to this data. In addition, we add rejection models with $\alpha = 0.1$ and $\alpha = 0.3$. A visualization of the reward surfaces is given in Figure \ref{fig:gauss-reward-plot}, and a visualization of the policy trees generated is given in Figure \ref{fig:gauss-reward-tree-plot}. The resulting trees match the splits we would expect which are highlighted in the plot.

\begin{figure}[h!]
    \centering
    \includegraphics[width=0.6\linewidth]{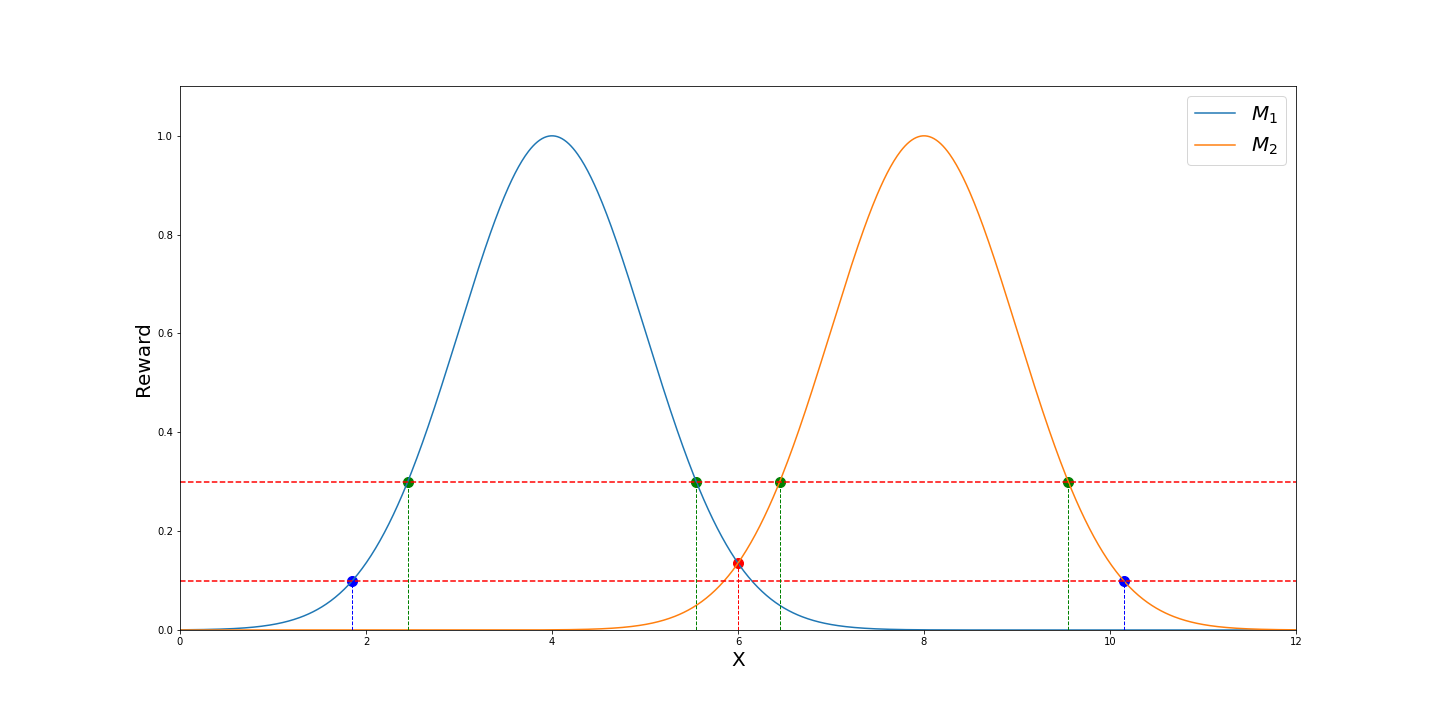}
    \caption{A simple 1-D example of synthetic model rewards with different rejection thresholds, denoted by the red dashed horizontal lines. In this case, the feature space $\mathcal{X} = [0,12]$ and the rejection parameters are $\alpha = 0.1$ and $\alpha = 0.3$.}
    \label{fig:gauss-reward-plot}
\end{figure}

\begin{figure}[h!]
    \centering
    \includegraphics[width=\linewidth]{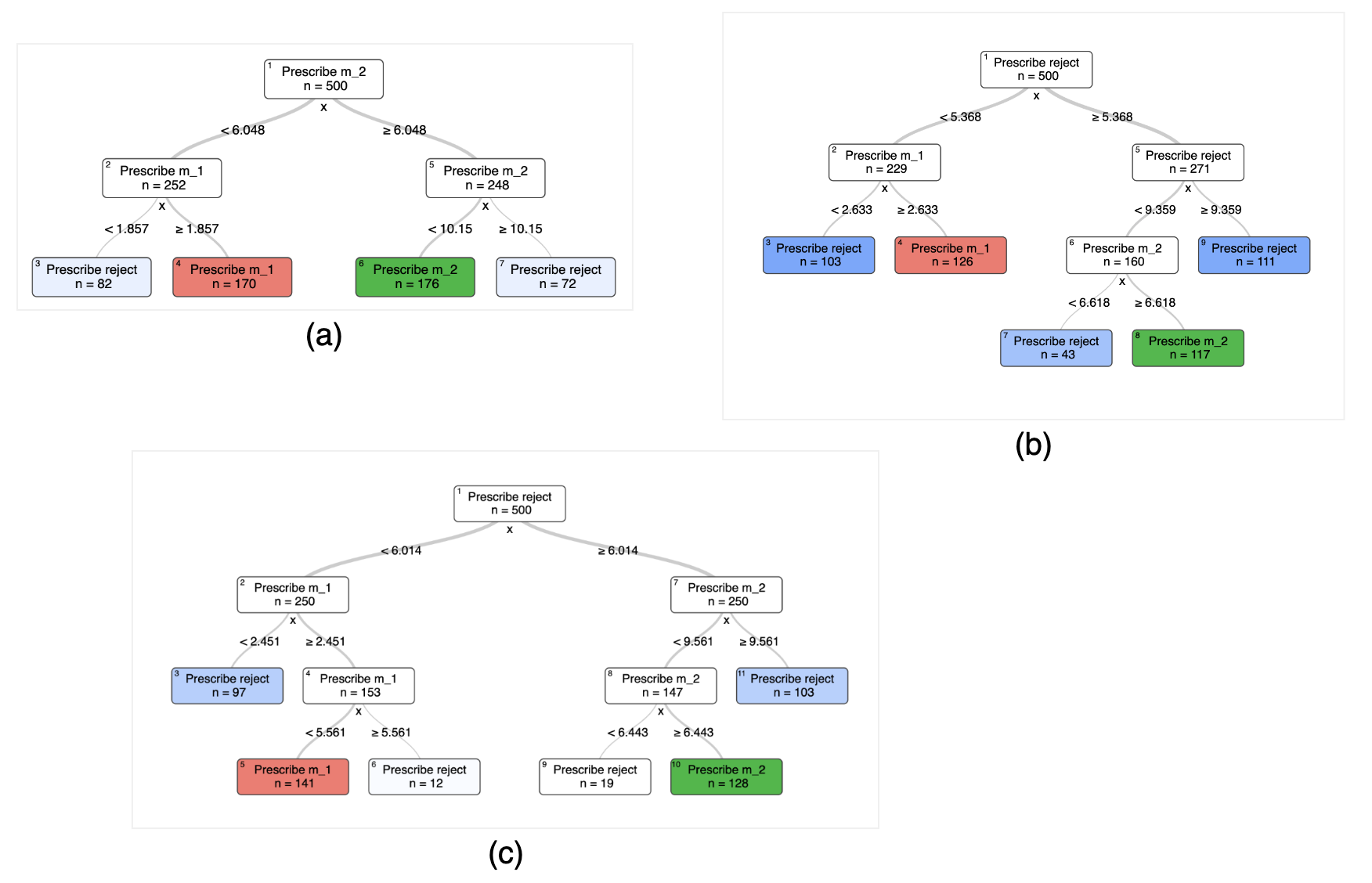}
    \caption{OP\textsuperscript{2}T models fit on the 1-D synthetic data described in Section \ref{sec:synthetic}. Tree (a) corresponds to no rejection, (b) to rejection with $\alpha = 0.1$, and (c) to rejection with $\alpha=0.3$.}
    \label{fig:gauss-reward-tree-plot}
\end{figure}

By making a small change to the previous example, we can also construct a setting that demonstrates the difference between learning an OP\textsuperscript{2}T and a Meta-Tree. Consider now two models, $M_1$ and $M_2$, over the feature space $\mathcal{X} = [0,18]$ with reward functions $R(M_1, x, y) = \exp(-\frac{1}{2}(x-4)^2) + 0.01$ and  $R(M_2, x, y) = \exp(-\frac{1}{2}(x-8)^2)$. This modification creates an interval, $[11, 18]$, over which model $M_1$ achieves a marginally higher reward than $M_2$. A visualization of the reward surfaces is given in Figure \ref{fig:gauss-reward-tail-plot}, shaded corresponding to the dominant model in each region. We sample $n=500$ points uniformly over $[0, 18]$ and generate a policy tree using our OP\textsuperscript{2}T approach, given in Figure \ref{fig:gauss-reward-tail-tree-plot}. Notice that in the right sub-tree, the prescription at the inner node is $M_2$, despite model $M_1$ yielding a greater reward than $M_2$ for the majority of the samples falling into this sub-tree. Were we to take a predictive, classification approach, and label each sample by the model that maximizes reward, the prediction at this node would be $M_1$. This would ignore the fact that the average reward for $M_2$ over the interval $[6,18]$ is significantly greater than $M_1$ (in this case, the difference in total reward is 2.27, and in expectation is 0.1856). We can see this difference in Figure \ref{fig:gauss-reward-tail-tree-plot}(a), where the Meta-Tree approach prescribes model $M_1$ at the root, left, and right subtrees. Therefore, while both trees perfectly separate the data, the depth-1 subtree of the OP\textsuperscript{2}T, starting from the root, achieves a total reward of 4.959, while the depth-1 subtree of the Meta-Tree achieves a total reward of 2.686. The result of this example is consistent with our analysis in Section \ref{sec:theory-prescription}. We also observe another, more dramatic example of this phenomenon in Section \ref{sec:projectile}.

\begin{figure}[h!]
    \centering
    \includegraphics[width=0.5\linewidth]{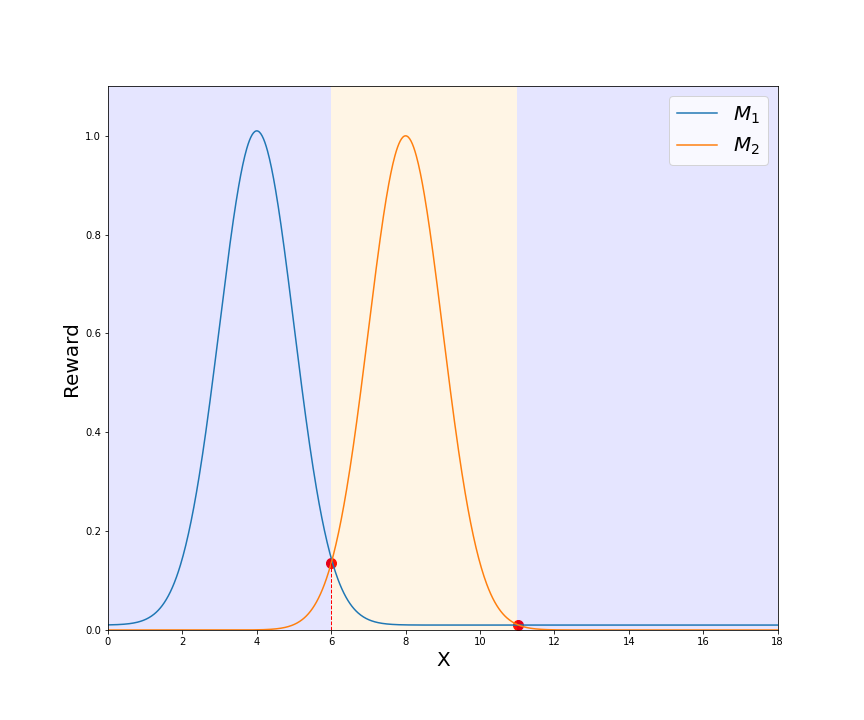}
    \caption{A simple 1-D example of synthetic model rewards demonstrating the potential difference between taking a prescriptive versus a predictive approach to model selection.}
    \label{fig:gauss-reward-tail-plot}
\end{figure}
\begin{figure}[h!]
\centering
    \subfloat[Meta-Tree]{\includegraphics[width=0.5\linewidth]{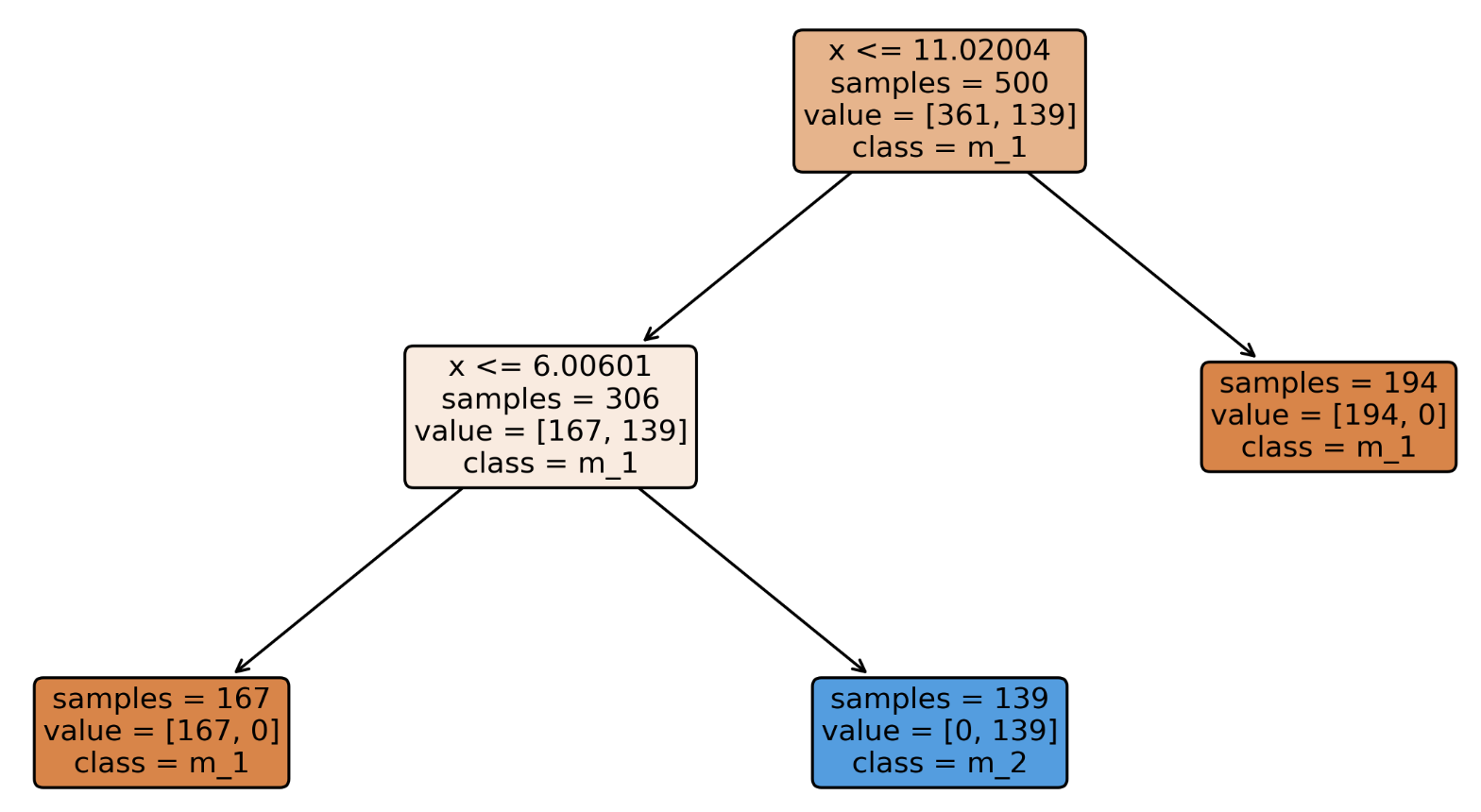}}
    \subfloat[OP\textsuperscript{2}T]{
    \includegraphics[width=0.4\linewidth]{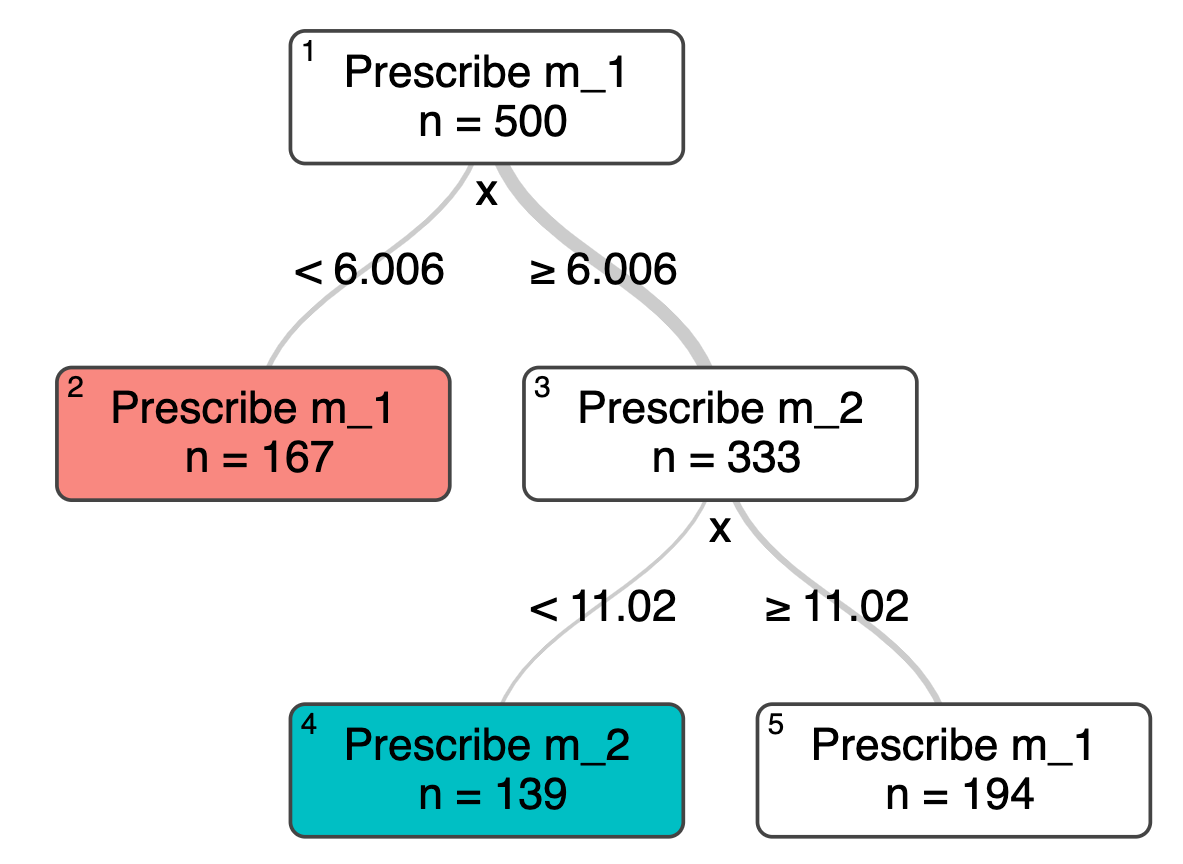}}
    \caption{OP\textsuperscript{2}T and Meta-Tree models fit on the 1-D synthetic data corresponding to the rewards shown in Figure \ref{fig:gauss-reward-tail-plot}.}
    \label{fig:gauss-reward-tail-tree-plot}
\end{figure}

\subsection{Concrete Compressive Strength}\label{sec:concrete}
As a baseline for evaluating our OP\textsuperscript{2}Ts on regression tasks, we apply our approach to the Concrete Compressive Strength dataset (n=1080, p=8), publicly available through the UCI ML Repository \citep{misc_concrete_compressive_strength_165}. We fit a boosted tree model (xgb), a random forest model (rf), a linear regression (lr) model, and an MLP model on the training data. We then create two ensembles: a mean ensemble and a ridge ensemble. The weights for the ridge ensemble are found by solving the optimization problem 
$$\mathbf{w}^* = \text{argmin}_{\mathbf{w} \in \mathbb{R}^m} \frac{1}{n}||\mathbf{wh}(\mathbf{X})^T - \mathbf{y}||_2^2 + \lambda ||\mathbf{w}||_2^2,$$
where $\mathbf{h}(\mathbf{X}) \in \mathbb{R}^{n \times m}$ are the predictions from the $m$ constituent models on some input data $\mathbf{X} \in \mathbb{R}^{n \times m}$ with $n$ samples and $\mathbf{y} \in \mathbb{R}^n$. This formulation has a closed-form solution,
$$\mathbf{w}^* = ((\mathbf{h}(\mathbf{X})\mathbf{h}(\mathbf{X})^T + \lambda \mathbf{I})^{-1}\mathbf{h}(\mathbf{X}))^T\mathbf{y}.$$
We fix $\lambda = 1$ for our experiments. Then, for our experiments, our actions are the set of weights $\mathbf{W} = (\mathbf{e}_1, \dots, \mathbf{e}_m, (\frac{1}{m}, \dots, \frac{1}{m}), \mathbf{w}^*)$. For rejection, we evaluate the threshold parameter $\alpha$ over the set $[100, 40, 30]$. The results of this experiment are shown given in Table \ref{tab:concrete_results}. Without rejection, the best constituent model (including the ensembles) achieved an out-of-sample MSE of 39.2, while the OP\textsuperscript{2}T without rejection achieved an out-of-sample MSE of 36.13, demonstrating that this approach can learn a simple, interpretable policy that generalizes well. We also observe that both the mean and ridge ensemble weights are prescribed in the trees, yielding an interpretable, adaptive model ensembling policy. Further, we gain some insight from the learned policy. In Figure \ref{fig:concrete-results-plt}, we provide a visualization of three OP\textsuperscript{2}T models fit with a max depth of $D_{\max}=3$, for ease of interpretability. We see that by partitioning the data based on the density of concrete in the cement mixture, splitting at 357.5 $kg/m^3$, the XGBoost model outperforms all other constituent models when the density is less than 357.5 $kg/m^3$, while the random forest model performs the best when the density is greater than or equal to 357.5 $kg/m^3$. Looking at the statistics for the dataset, we found that the 75th percentile for concrete density is 350 $kg/m^3$. Therefore, we can conclude that the learned policy is to use the random forest model when the cement density is high (above the 75th percentile), otherwise, it is best to use the XGBoost model.

\begin{table}[!h]
    \centering
    \begin{tabular}{|c|cc:cc:cc|}
        \hline
        \multirow{2}{*}{Model}  & \multicolumn{2}{c:}{No Rejection} & \multicolumn{2}{c:}{Rejection w/ $\alpha = 40$} & 
        \multicolumn{2}{c|}{Rejection w/ $\alpha = 30$} \\
        \cline{2-7}
        & MSE & Reject (\%)& MSE & Reject (\%)& MSE & Reject (\%)\\
        \hline
        Meta-XGB  & 39.69 & 0 & 37.32 & 18.93 & 33.58 & 43.20 \\
        Meta-Tree  & 39.57 & 0 & 42.82 & 16.02 & 37.35 &  37.86 \\
        OP\textsuperscript{2}T & 36.13 & 0 & 27.30 & 17.01 & 25.22 & 37.43 \\
        \hline
    \end{tabular}
    \caption{Out-of-sample MSE on the concrete compressive strength dataset across different thresholds for rejection.}
    \label{tab:concrete_results}
\end{table}

With the rejection option, we see that our OP\textsuperscript{2}T approach learns policies that generalize well. In each case, we can also interpret the policies that are learned, and we observe consistency across the policy trees. For example, in tree (b) we see that the left subtree learns a split similar to tree (a), prescribing the random forest model when cement density is high, and prescribing the XGBoost model otherwise. Something similar is learned in tree (c), in the left-most subtree, except when the cement density is high, the tree opts to reject rather than prescribing the random forest model. This is due to the reward for rejection being considerably higher in tree (c) than in trees (a) and (b). We can also observe that Trees (b) and (c) both learn to split on Blast Furnace Slag and Water density at very similar values. Further, both reject when  Blast Furnace Slag density is high (above the 60th percentile) and Water density is low (below the 50th percentile). Tree (d) also learns a similar split on Cement density, as well as on Blast Furnace Slag density, except it rejects any time the Blast Furnace Slag density is high. From these tree-based policies, we can identify clear and consistent partitions of the data within which different models perform well and partitions under which no models perform particularly well. To summarize, we can connect the results of this experiment directly back to the motivating questions we posed at the beginning of the paper.

\begin{figure}[h!]
    \centering
    \includegraphics[width=1\linewidth]{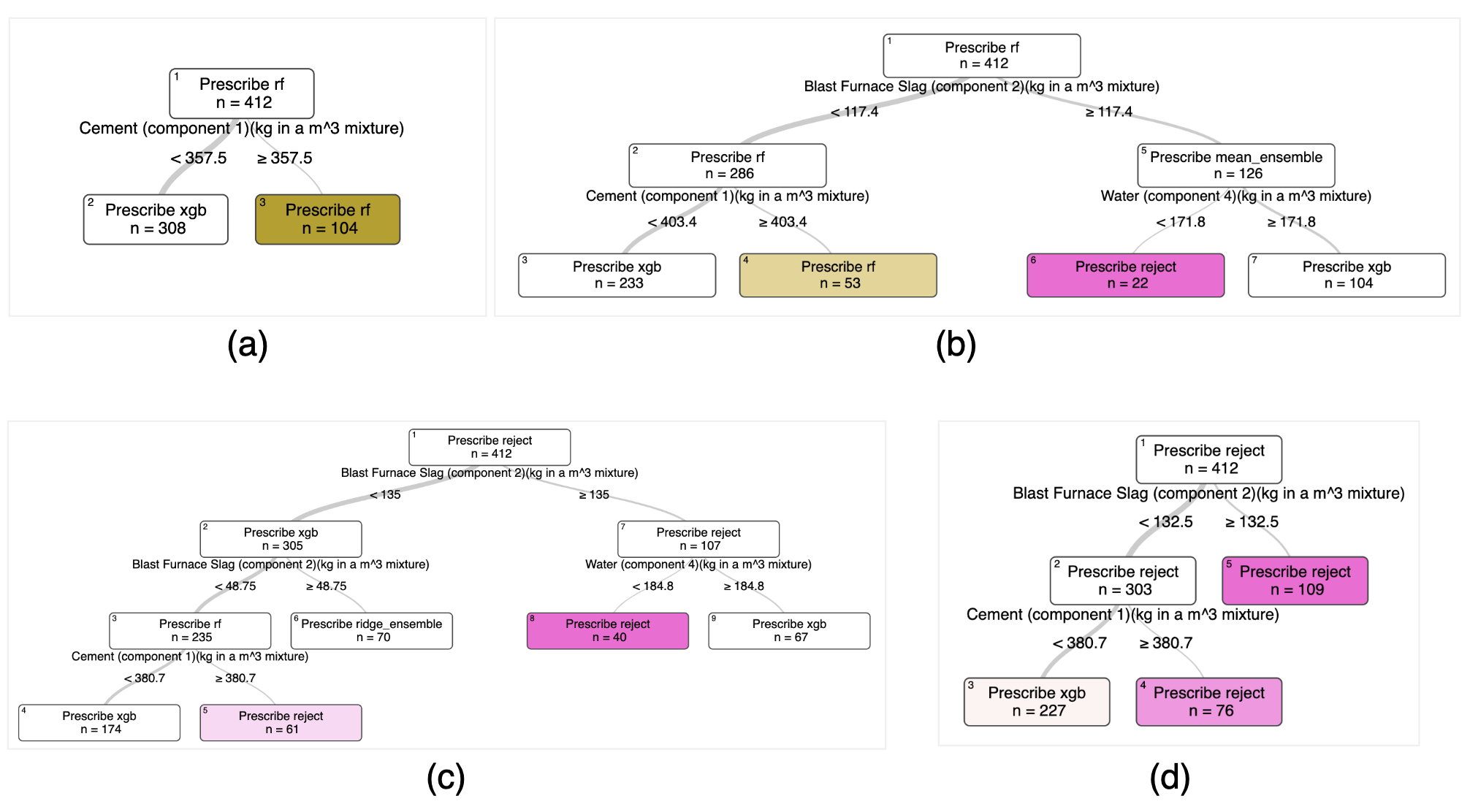}
    \caption{OP\textsuperscript{2}T models fit on the Concrete Compression dataset with a max depth of $d=3$ (for interpretability) and varying rewards for rejection. Tree (a) is fit without a rejection option, while trees (b), (c), and (d) are fit with rejection parameters $\alpha = [100, 40, 25]$ respectively.}
    \label{fig:concrete-results-plt}
\end{figure}

\noindent \textbf{Should we use a model ensemble? If so, when?} Yes, in some settings, the mean ensemble is the best, while in other cases the ridge ensemble is better. For example, when the blast furnace slag component is in the range $[49, 135]$, tree (c) in Figure \ref{fig:concrete-results-plt} identifies the ridge ensemble to be the best model. However, for the majority of samples, using either the XGBoost or random forest model alone is better than using one of the ensembles.

\noindent \textbf{When are the models likely to be error-prone?} In short, when there is a high concentration of the blast furnace slag or the cement component in the concrete mixture.

\noindent \textbf{Should we use an interpretable model?} In this case, when the interpretable model is linear regression, the answer is no. Across all rejection thresholds, the OP\textsuperscript{2}Ts do not identify any partition of the feature space where the linear regression model performs the best. It is possible that other interpretable models may perform better, and a practitioner may want to explore additional model classes if interpretability is critical.
% \begin{figure}[h!]
%     \centering
%     \includegraphics[width=1\linewidth]{}
%     \caption{An example of a feature-based OP\textsuperscript{2}T fit on the Concrete Compressive Strength Dataset.}
%     \label{fig:concrete-tree-viz}
% \end{figure}

\subsection{Projectile Motion with Drag}\label{sec:projectile}
%Introduce.
Motivated by recent work that has successfully combined physics-based and ML models for predicting the dynamics of physical systems \citep{boussioux_hurricane_2022}, we develop a synthetic experiment to demonstrate how OP\textsuperscript{2}Ts can adaptively prescribe these models based on the conditions of the physical system, significantly reducing the overall error. The setup is as follows. Suppose we are launching a projectile in 2-D space with a fixed mass $m =1$ from ground level, at the origin $(0,0)$, with a launch angle $\theta$ and initial speed $v_0$. Further, we are going to assume that there is some air resistance parameterized by a scalar $c$ which we will soon define. We are interested in estimating the total ground distance covered by the projectile as a function of $\theta, v_0$ and $c$. Let $x_f$ denote the final position of the projectile when it hits the ground. In the absence of air resistance or other effects, this can be found to be
\begin{equation}\label{eq:physics-no-drag}
    \hat x_f = \frac{v_0\sin(2\theta)}{g}.
\end{equation}
Now, we incorporate a notion of drag by assuming that air resistance occurs in the opposite direction of the projectile's velocity, and its magnitude is directly proportional to the current speed. The equation of motion for the projectile is then given by
\begin{equation}
    \mathbf{F} = m\mathbf{a} = m\frac{d\mathbf{v}}{dt} = m\mathbf{g} - c\mathbf{v},
\end{equation}
where $\mathbf{g} = (0, -g)$ is the gravitational force, and $\mathbf{v}$ is the velocity of the projectile. With these dynamics defined, the vertical position of the projectile at time $t$ is given by
\begin{equation}\label{eq:physics-y}
    y(t) = \frac{v_t}{g}(v_0\sin(\theta) + v_t)(1 - \exp(-\frac{gt}{v_t})) - v_tt,
\end{equation}
where $v_t = mg/c$ is the terminal velocity. The horizontal position is then given by
\begin{equation}\label{eq:physics-x}
    x(t) = \frac{v_0v_t\cos(\theta)}{g}(1 - \exp(-\frac{gt}{v_t})).
\end{equation}
% MSE - 166.5756
Then, by finding $t^* > 0$ such that $y(t^*) = 0$, we can find $x_f = x(t^*)$. We also notice that in the limit, as $t \rightarrow \infty$, we have
\begin{equation}\label{eq:physics-drag}
  \bar  x_f = \frac{v_0v_t\cos(\theta)}{g},
\end{equation}
which gives us an upper bound on the ground distance traveled by the projectile in the presence of this drag force. From this analysis, we now have two closed-formed estimates of $x_f$ as a function of $\theta, v_0$, and $c$, namely Eqs. \eqref{eq:physics-no-drag} and \eqref{eq:physics-drag}. For brevity, we refer to the estimate from \eqref{eq:physics-no-drag} as $\hat x_f$ and \eqref{eq:physics-drag} as $\bar x_f$. We want to create a pattern-based ML model that learns to predict $x_f$ using ``real-world'' samples. Assuming that, perhaps due to physical limitations, the maximum initial speed at which we can launch the projectile is $100 \text{ } m/s$, we can define our feature space as $\mathcal{X} = [0, 100] \times [0, \pi/2] \times [0,1]$, since $v_0 \in [0, 100]$, $\theta \in [0, \pi/2]$, and $c \in [0, 1]$. We then discretize the feature space and randomly sample $n=20k$ data points from $\mathcal{X}$. For each sample, we find the ``ground-truth'' value for $x_f$ using Eqs. \eqref{eq:physics-x} and \eqref{eq:physics-y}, and use these values as our target. Then, splitting the data into $50\%$ training, $25\%$ validation, and $25\%$ test sets, we fit an MLP model $h$ with hidden sizes $[16,32,16]$ on the training data. We chose the MLP model class because of its ability to model smooth, nonlinear functions. Our constituent models are then $\hat x_f$, $\bar x_f$, and $h$. For this experiment, we do not consider model ensembles, so we set $\mathbf{W} = \text{diag}(1,1,1)$. We fit our OP\textsuperscript{2}T model with the predictions from these constituent models on the validation set. The results, without the rejection option, are given in Table \ref{tab:physics_results}. 

\begin{table}[!h]\label{table:physics-no-reject}
    \centering
    \begin{tabular}{|c|c|}
    \hline
        Model & MSE\\
        \hline
        Physics (no drag) & $59098.38$\\
        Physics (drag) & $188948.88$\\
        MLP & $241.16$\\
        % \textbf{OP\textsuperscript{2}T} & $\mathbf{143.83}$\\
        \hline
    \end{tabular}
    \caption{Out-of-sample MSE for the constituent models of the projectile motion dataset. Notice how the closed-form physics equations are extremely poor estimators on average over the entire feature space.}
\end{table}

\begin{table}[!h]
    \centering
    \begin{tabular}{|c|cc:cc:cc|}
        \hline
        \multirow{2}{*}{Model}  & \multicolumn{2}{c:}{No Rejection} & \multicolumn{2}{c:}{Rejection w/ $\alpha = 250$} & 
        \multicolumn{2}{c|}{Rejection w/ $\alpha = 50$} \\
        \cline{2-7}
        & MSE & Reject (\%) & MSE & Reject (\%)& MSE & Reject (\%) \\
        \hline
        Meta-XGB & 140.85 & 0 & 28.65 & 9.96 & 5.56 & 31.32 \\
        Meta-Tree  & 150.56 & 0 & 36.19 & 12.14 & 7.67 & 34.20 \\
        OP\textsuperscript{2}T & 141.74 & 0 & 31.53 & 11.54 & 6.63 & 32.8\\
        \hline
    \end{tabular}
    \caption{Out-of-sample MSE on the projectile motion dataset across different thresholds for rejection.}
    \label{tab:physics_results}
\end{table}

What is most surprising from these results is that despite the closed-form physics-based equations for $x_f$ performing terribly on average, as shown in Table \ref{table:physics-no-reject}, our OP\textsuperscript{2}T approach successfully uncovers the relatively small regions of $\mathcal{X}$ within which these estimates are very close to the ground-truth, and uses this to drastically reduce the MSE compared to the MLP model alone. Further, the OP\textsuperscript{2}T significantly outperforms the Meta-Tree, especially for shallow tree depths. It is also not obvious a priori that such regions would exist. Since the MLP model has been trained on data sampled across the entire feature space, it is possible for this model to fit all of the data quite well. In practice, however, in minimizing the average error, the MLP has made some trade-offs and cannot compete with the physics-based equations in some regions of $\mathcal{X}$. We also observe that for this example, the Meta-XGB approach yields the best performance in terms of MSE and percent of samples rejected. This agrees with our theoretical analysis in Sections \ref{sec:theory-prescription} and \ref{sec:theory-rejection}, as the boosted tree model is more capable of fitting complex, nonlinear boundaries, such that the relative rewards of the models in each partition is not significant. A primary conclusion of Proposition \ref{prop-op2t-vs-meta-tree} is that the OP\textsuperscript{2}T has an advantage in the non-realizable setting when we cannot perfectly separate the feature space by the best-performing model. In this case, the Meta-XGB approach is successful in learning more complex partitions of the feature space. However, the Meta-XGB approach lacks interpretability, so we cannot gain insights from the resulting boosted tree model. Standard explainability methods for boosted trees, such as plotting relative feature importance, are not particularly useful in this setting either. We aim to answer questions regarding which model to use, and when, not which features are generally important.

\begin{figure}[h!]
    \centering
    \includegraphics[width=1\linewidth]{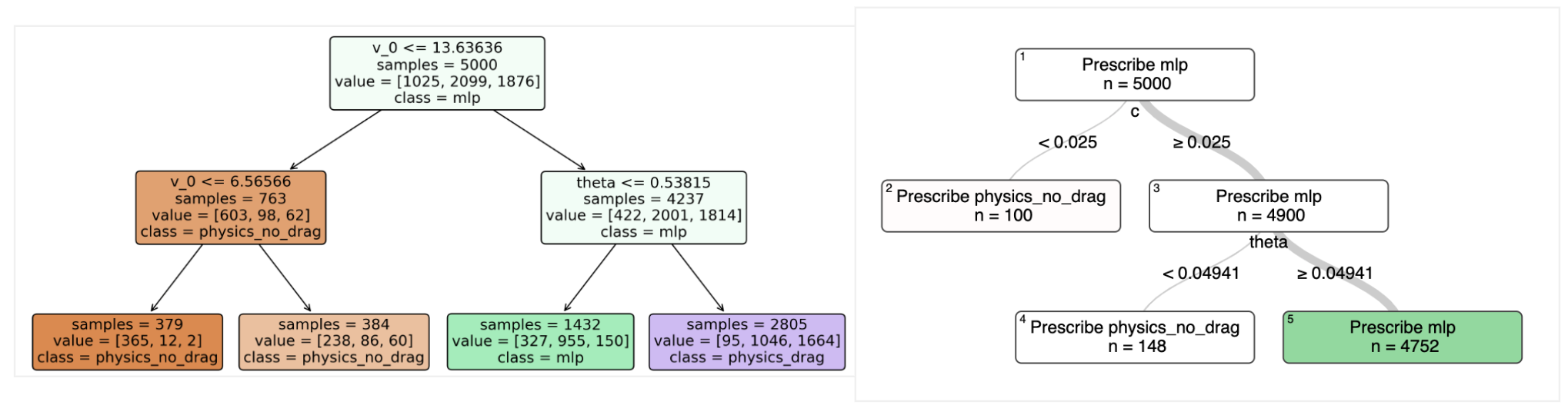}
    \caption{Left: A depth-2 Meta-Tree for the projectile motion problem. Right: A depth-2 OP\textsuperscript{2}T for the same problem. The Meta-Tree achieves an out-of-sample MSE of 79344.09, while the OP\textsuperscript{2}T achieves an MSE of 183.26. The best single model in hindsight, the MLP, achieves an MSE of 241.16. This demonstrates the benefit of prescription over prediction in this setting.}
    \label{fig:physics-depth-2-trees}
\end{figure}

\begin{table}[!h]\label{table:physics-max-depth}
    \centering
    \begin{tabular}{|c|c|c|}
    \hline
        \multirow{2}{*}{Max Depth} & \multicolumn{2}{c|}{MSE}\\
        \cline{2-3} & Meta-Tree & OP\textsuperscript{2}T\\
        \hline
        2 & $79344.09$ & 183.26\\
        3 & $317.13$ & 165.95\\
        4 & $216.21$ & 162.82\\
        5 & $171.307$ & 156.21\\
        % \textbf{OP\textsuperscript{2}T} & $\mathbf{143.83}$\\
        \hline
    \end{tabular}
    \caption{Out-of-sample MSE for the Meta-Tree and OP\textsuperscript{2}T approaches for shallow tree depths on the projectile motion dataset. Such shallow structures are necessary in settings where interpretability is crucial.}
    \label{tab:projectile-tree-op2t}
\end{table}

We observe that the OP\textsuperscript{2}T approach significantly outperforms Meta-Tree, demonstrating the impact of taking a prescriptive approach. Further, by restricting the depth of the trees, we can observe an even more drastic disparity between the OP\textsuperscript{2}Ts and Meta-Trees. For a depth-2 tree with a minimum leaf size of 50, our OP\textsuperscript{2}T model achieved an MSE of 183.26, while the Meta-Tree achieved an MSE of 79344.09. A visualization of these trees is given in Figure \ref{fig:physics-depth-2-trees}. This result is due to the extremely poor accuracy of the closed-form physics equations when the conditions of the system are far from the assumptions made for them, and the Meta-Tree only being able to consider which model is best for each sample, rather than the relative reward for each model. The Meta-Tree finds partitions of the space where the physics equations perform the best for the majority of samples (minimizing entropy), but the predicted model performs orders of magnitude worse on the remaining samples in the partition. For example, the Meta-Tree predicts $\bar x_f$, the physics equation with drag, when $v_0 \geq 13.63$ and $\theta \geq 0.538$. From the data used to train the tree, $2805$ samples fall into this partition, of which a majority ($N=1664$) are best predicted by $\bar x_f$. However, the average reward (MSE) for $\bar x_f$ over the samples in this leaf node is $135197.52$, while the average reward for the MLP model is $220.38$, orders of magnitude lower. The depth-3 Meta-Tree eliminates most of this error by replacing this leaf with a split on the drag coefficient, $c$, only predicting $\bar x_f$ when $c \geq 0.295$. In contrast, we see in Figure \ref{fig:physics-depth-2-trees} that the depth-2 OP\textsuperscript{2}T first splits on $c \leq 0.025$, essentially determining if the drag force is negligible, matching the true physical assumption made to derive Eq. \eqref{eq:physics-no-drag}. We provide a comparison of the Meta-Tree and OP\textsuperscript{2}T for shallow depths in Table \ref{tab:projectile-tree-op2t}. Since trees become less interpretable with increasing depth, the performance of shallow trees is crucial for applications that require interpretability. For this dataset, we can see that the OP\textsuperscript{2}T approach yields shallow trees that are significantly more accurate than the Meta-Tree approach, providing a concise, interpretable policy for selecting whether to use a physics-based model or a pattern-based deep learning model.
% \begin{table}
%     \centering
%     \begin{tabular}{|c|c|c|}
%     \hline
%         $\alpha$ & MSE & Percent Predicted (\%)\\
%         \hline
%          -- & 166.57 & 100 \\
%          300 & 48.33 & 87.02\\
%          250 & 37.14 & 83.96\\
%          200 & 26.76  & 80.66\\
%          150 & 23.92 & 77.3\\
%         125 & 16.45 & 72.6\\
%         100 & 14.49 & 68.54 \\
%         50 & 9.87 & 63.62\\
%         \hline
%     \end{tabular}
%     \caption{Accuracy-Rejection trade-off for our OP\textsuperscript{2}T approach on the projectile motion dataset.}
%     \label{tab:my_label}
% \end{table}

\subsection{IMDb Sentiment Analysis: Glass-box vs. Black-box Models}
The IMDb Movie Review Dataset, introduced by \cite{imdb_original}, is a collection of $n=50k$ movie reviews from the IMDb website, along with binary sentiment labels, indicating positive or negative sentiment. When \cite{imdb_original} introduced this dataset, they developed and trained a Word2Vec model for sentiment prediction on this dataset. This approach preserved interpretability, allowing for both faithful explanations of model predictions as well as the ability to investigate the learned word embedding space. However, with the development of deep contextual encoders, such as BERT, introduced by \cite{devlin_bert_2019}, along with large public model repositories like HuggingFace, it is now possible to download a BERT model fine-tuned on this dataset in a matter of minutes that significantly outperforms the original Word2Vec model on this dataset across every standard metric. In the context of high-stakes decision-making problems, however, it is often the case that interpretable models are preferred. This naturally leads to the problem of how to best balance interpretability and performance, which has been well-studied and debated in the literature \citep{rudin_interpretable_2021}. In \cite{singh_augmenting_2023}, the authors develop an interpretable language model for sentiment analysis and explore the idea of adaptively selecting either their interpretable model or a black-box, BERT model. They do this by manually fixing confidence thresholds for deferring to a BERT model based on the output of the interpretable model. In their experiments, they demonstrate that performance can be preserved while using the interpretable model for a significant portion of samples across a variety of prediction tasks. However, such a process is inefficient, as it requires manually testing various thresholds, and it only considers the confidence of the interpretable model, rather than considering the confidence of the interpretable model and BERT model jointly. The problem is further complicated when there are more than two models under consideration.

In the context of natural language processing (NLP) and sentiment analysis, there are a variety of interpretable modeling approaches. In this experiment, we consider three models: a BERT model, a Word2Vec model, and a logistic regression model using term-frequency inverse-document-frequency (TF-IDF) features. The latter two models are fairly interpretable, as they are linear functions over the words contained in the reviews. All models were fit on the IMDb training dataset ($n=25k$ samples). To fit the policy models, we sampled $n=10k$ samples from the IMDb test set, and used the remaining $n=15k$ samples as our final test set. For this application, since the dataset is balanced and the primary metric is accuracy, we use the misclassification reward $R_{MIS}$, to fit our OP\textsuperscript{2}T models. Since we are interested in answering the question of interpretability, we only consider selecting the individual constituent models, not ensembles, so we set $\mathbf{W} = \text{diag}(1,1,1)$. For the feature space $\mathcal{Z}$, we consider the space of constituent model outputs, which we will also refer to as the model confidence scores. We originally considered extending this feature space with additional, structured meta-data from the reviews (e.g. review length, keywords, negation). However, we found that for this setup, the resulting OP\textsuperscript{2}Ts never used this meta-data, so we focus instead on the feature space being the model confidence scores alone. 

 The main results of our experiments are given in Table \ref{tab:imdb_results}. The best constituent model in hindsight was the BERT model, achieving an out-of-sample accuracy of 92.8. We can then observe that our OP\textsuperscript{2}T approach outperforms the best model in hindsight as well as the Meta-XGB and Meta-Tree approaches across all levels of rejection. Further, the policies learned by our method find partitions of the data where the interpretable models empirically maximize the expected reward. Therefore, with access to model outputs alone, we can construct an interpretable policy for adaptive model selection that outperforms the black-box baseline model and makes use of the interpretable models. 

\begin{table}
\resizebox{\linewidth}{!}{
    \begin{tabular}{|c|cc:cc:cc:cc|}
        \hline
        \multirow{2}{*}{Model}  & \multicolumn{2}{c:}{No Rejection} & \multicolumn{2}{c:}{Rejection ($\alpha = 0.15$)} & 
        \multicolumn{2}{c|}{Rejection ($\alpha = 0.10$)} &  \multicolumn{2}{c|}{Rejection ($\alpha = 0.02$)}\\
        \cline{2-9}
        & Acc & Reject & Acc & Reject & Acc & Reject & Acc & Reject\\
        \hline
        Meta-XGB  & 93.44 & 0 & 97.27 & 16.44 & 98.04 & 24.43 & 98.53 & 49.17 \\
        Meta-Tree  & 92.78 & 0 & 97.22 & 16.66 & 98.01 & 25.02 & 98.50 & 49.30 \\
        OP\textsuperscript{2}T  & 93.53 & 0 & 97.44 & 15.34 & 98.14 & 23.03 & 99.13 & 48.63\\
        \hline
    \end{tabular}
    }
    \caption{Out-of-sample accuracy on the IMDb Movie Review dataset across different thresholds for rejection.}
    \label{tab:imdb_results}
\end{table}
%DB: The overperformance over MEta-XGB is very small. Can we try using Meta-XGB as a constituent model?
%MP: The OP2T outperforms Meta-XGB, with higher accuracy and less rejection.

A visualization of the learned OP\textsuperscript{2}Ts is given in Figure \ref{fig:imdb-trees-None-0.15}. We see that over the space of model confidence scores, our OP\textsuperscript{2}T approach finds intersections of confidence intervals over all three of our models to provide prescriptions that outperform the Meta-Tree and Meta-XGB approaches. For example, in tree (a) in Figure \ref{fig:imdb-trees-None-0.15}, we see that the learned policy is to use the BERT model if its output score is outside the interval $(0.012, 0.992)$, otherwise, query the interpretable models. Notice that rather than simply deferring to one of the interpretable models, the policy looks at the confidence scores of both interpretable models, and will still prescribe the BERT model if the TF-IDF model is not confident the sample is positive while the Word2Vec model is not confident the sample is negative. Similarly, tree (b) contains intersections of confidence intervals between the Word2Vec and BERT models, prescribing rejection when the BERT model is not confident and in agreement with the Word2Vec model. Since many of the resulting OP\textsuperscript{2}Ts are composed of intersections of all three constituent models' confidence scores, the policies generated for model selection would not feasibly be discovered by manually searching over confidence intervals.

% For each movie review, we extracted 12 numeric features: 9 binary indicator variables for the presence of the words ``good'', ``great'', ``bad'', ``terrible'', ``loved'', ``hated'', ``remarkable'', the exclamation mark ``!'', and negation, and 3 integer-valued features counting the total length of the review (in characters), the number of sentences, and the number of paragraphs. We consider the extended feature space which includes these tabular features as well as the constituent model confidence scores.

% \begin{figure}[h!]
%     \centering
%     \includegraphics[width=1\linewidth]{}
%     \caption{OP\textsuperscript{2}T models fit on the IMDb Movie Review dataset with varying rewards for rejection. Trees (a),  (b), and (c) are fit with rejection parameter $\alpha = [0.15, 0.10, 0.08]$ respectively.}
%     \label{fig:imdb-reject-conf-trees}
% \end{figure}

\begin{figure}[!h]
\centering
\subfloat[No Rejection]{\includegraphics[width = 0.4\linewidth]{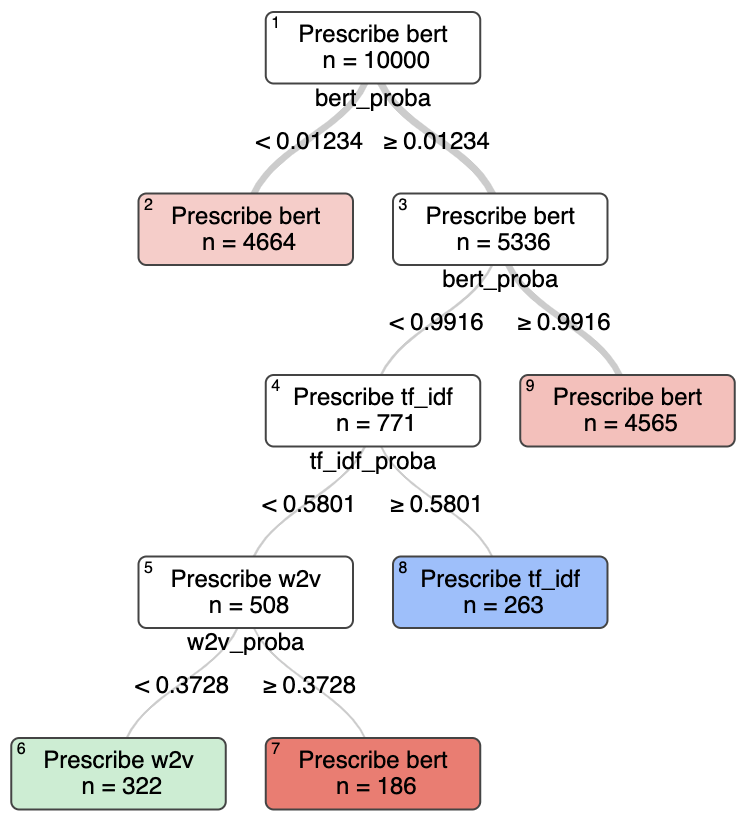}} 
\subfloat[Rejection $\alpha = 0.15$]{\includegraphics[width = 0.6\linewidth]{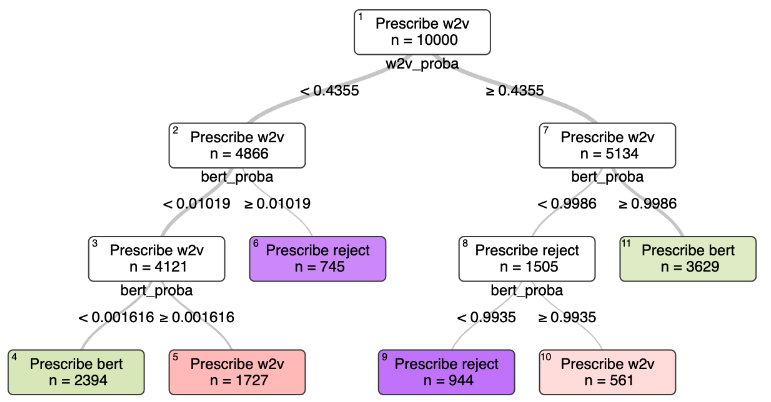}}\\
\caption{Visualizations of the OP\textsuperscript{2}Ts fit on the IMDb dataset using the original feature space extended with the constituent model scores. We observe that the best performing OP\textsuperscript{2}Ts rely exclusively on the model scores.}
\label{fig:imdb-trees-None-0.15}
\end{figure}
\begin{figure}[!h]
\centering
\subfloat[OP\textsuperscript{2}T with $\alpha = 0.02$]{\includegraphics[width = 0.5\linewidth]{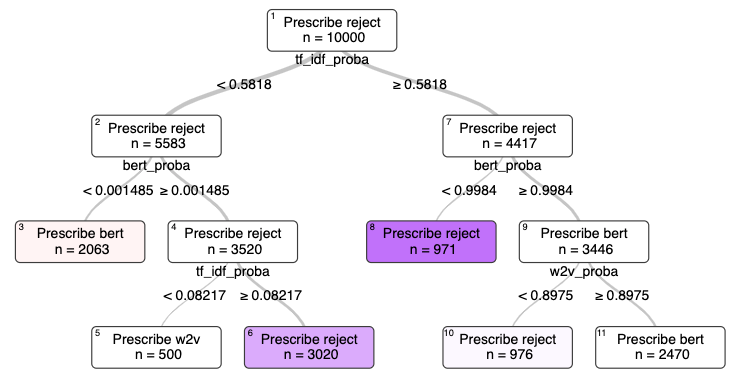}} 
\subfloat[Meta-Tree with $\alpha = 0.02$]{\includegraphics[width = 0.5\linewidth]{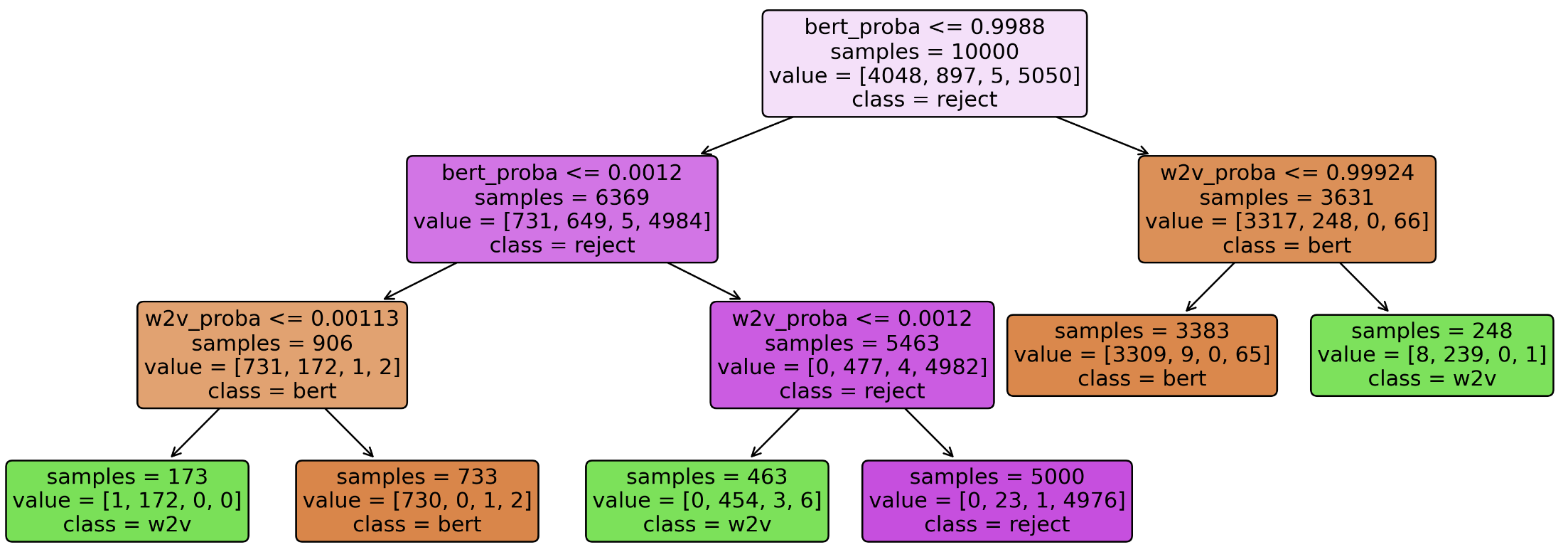}}\\
\caption{Visualization of an OP\textsuperscript{2}T and a Meta-Tree fit on the IMDb dataset with a high rejection reward ($\alpha = 0.02$).}
\label{fig:imdb-trees-0.02}
\end{figure}

In Figure \ref{fig:imdb-trees-0.02}, we give a visualization of the OP\textsuperscript{2}T and corresponding Meta-Tree generated with $\alpha = 0.02$. The OP\textsuperscript{2}T achieves higher out-of-sample accuracy while rejecting fewer samples. We provide this visualization to highlight the structural differences that can occur between Meta-Trees and OP\textsuperscript{2}Ts. By optimizing tree structure globally and considering relative rewards, the OP\textsuperscript{2}T makes use of the confidence scores of all three constituent models and identifies three separate regions within which rejection is prescribed. In contrast, the Meta-Tree, grown using a greedy heuristic and unable to consider the relative rewards of each model, finds simpler conditions for rejection, resulting in lower accuracy and more samples being rejected. Our results are also consistent with our theoretical analysis in Section \ref{sec:theory-prescription}, as we see a smaller gap between OP\textsuperscript{2}T and the alternative approaches due to the small difference on average in relative reward between the constituent models. To summarize, we can connect our results back to our motivating questions as follows.

\noindent \textbf{Should an interpretable model be used?} Yes, in many cases the interpretable models can outperform the black-box BERT model. Specifically, when the BERT model disagrees with the predictions of the interpretable models, and the BERT model is not very confident, using one of the interpretable models improves performance. In Figure \ref{fig:imdb-trees-None-0.15}, tree (b) prescribes the Word2Vec model for 23\% of samples.

\noindent \textbf{When are the models likely to be error-prone?} In short, when the models disagree, and none of the models are very confident, all of the models are likely to be error-prone. The resulting trees give specific conditions for these cases, such as in Figure \ref{fig:imdb-trees-0.02}, OP\textsuperscript{2}T (a) prescribes rejection when the output of the TF-IDF model is in the interval $[0.082, 0.582]$ and the BERT model output is in the interval $[0.0015, 1]$.

% \begin{figure}[!h]
% \centering
% \subfloat[OP\textsuperscript{2}T with $\alpha = 0.02$]{\includegraphics[width = 0.5\linewidth]{images/imdb_op2t_0.02.png}} 
% \subfloat[Meta-Tree with $\alpha = 0.02$]{\includegraphics[width = 0.5\linewidth]{images/imdb_meta_tree_2.png}}\\
% \caption{Visualization of an OP\textsuperscript{2}T and a Meta-Tree fit on the IMDb dataset with a high rejection reward ($\alpha = 0.02$).}
% \label{fig:imdb-trees-0.02}

% \end{figure}

\subsection{MIMIC-IV}
The MIMIC-IV dataset is a publicly available medical dataset consisting of electronic health records (EHRs) from patients admitted to the intensive care unit at Beth Israel Deaconess Medical Center in Massachusetts \citep{johnson_mimic-iv_2023}. The dataset includes patient measurements, medications, diagnoses, treatments, and free-text clinical notes. In this section, we evaluate our methodology on the task of predicting mortality during hospital stays using only the tabular measurement data available in MIMIC-IV.

For the tabular mortality prediction task, we have a cohort of $n =11306$ separate patient visits with $14$ features representing patient measurements such as age, body temperature, glucose, and white blood cell count. There is missingness in this dataset, and we replace all missing entries with a special value of $-1$ (all measurements in the dataset are non-negative). For our constituent models, we create and train a random forest model (RF), a multi-layer perceptron (MLP) model, and a logistic regression model (LR), and two boosted tree models: an XGBoost model as in \cite{xgboost} and an Explainable Boosting Machine (EBM) model as in \cite{nori2019interpretml}. For our set of ensembles $\mathbf{W}$, we include the mean ensemble $\mathbf{w} = [\frac{1}{5},\dots, \frac{1}{5}]$, in addition to every pair-wise ensemble $\mathbf{w}_{ij} = \frac{1}{2}(\mathbf{e}_i + \mathbf{e}_j)$ for all possible pairs of constituent models. We split the data $60/30/10$ as training, validation, and test datasets. We fit our OP\textsuperscript{2}Ts and benchmark models on the validation set. Further, for fitting the policy models, we experiment with two versions of the feature space: the original feature space, and the original feature space extended by the output scores of the constituent models. We evaluate the performance of these methods with ROC-AUC on the test dataset.

\begin{table}
    \centering
    \begin{tabular}{|c|cc:cc:cc|}
        \hline
        \multirow{2}{*}{Model} & \multicolumn{2}{c:}{No Rejection} & \multicolumn{2}{c:}{Rejection w/ $\alpha = 0.45$} & \multicolumn{2}{c|}{Rejection w/ $\alpha = 0.40$}\\
        \cline{2-7}
         & AUC & Reject & AUC & Reject & AUC & Reject \\
        \hline
        Meta-XGB & 0.9224 & 0 & 0.9468 & 18.57 & 0.9639 & 32.98  \\
        Meta-Tree & 0.9137 & 0 & 0.9419 & 19.36 & 0.9609 & 35.46\\
        OP\textsuperscript{2}T  & 0.9235 & 0 & 0.9478 & 18.83 & 0.9673 & 32.97 \\
        \hline
    \end{tabular}
    \caption{Out-of-sample Accuracy on the MIMIC-IV dataset across different thresholds for rejection. The feature space for the meta-learners is the union of the original feature space and the constituent model scores (not including ensembles).}
    \label{tab:mimic_results_scores}
\end{table}

\begin{table}
    \centering
    \begin{tabular}{|c|cc:cc:cc|}
        \hline
        \multirow{2}{*}{Model} & \multicolumn{2}{c:}{No Rejection} & \multicolumn{2}{c:}{Rejection w/ $\alpha = 0.40$} & \multicolumn{2}{c|}{Rejection w/ $\alpha = 0.35$}\\
        \cline{2-7}
         & AUC & Reject & AUC & Reject & AUC & Reject \\
        \hline
        Meta-XGB & 0.9209 & 0 & 0.9455 & 22.19 & 0.9718 & 43.41  \\
        Meta-Tree & 0.9152 & 0 & 0.9422 & 23.52 & 0.9675 & 42.00\\
        OP\textsuperscript{2}T  & 0.9235 & 0 & 0.9476 & 21.84 & 0.9736 & 42.23 \\
        \hline
    \end{tabular}
    \caption{Out-of-sample Accuracy on the MIMIC-IV dataset across different thresholds for rejection. The feature space for the meta-learners is the original feature space.}
    \label{tab:mimic_results_features}
\end{table}

 The results for this experiment are summarized in Tables \ref{tab:mimic_results_scores} and \ref{tab:mimic_results_features}. We observe that the OP\textsuperscript{2}Ts strictly dominate the Meta-Tree and Meta-XGB methods on this dataset, and that including the constituent model scores further improves the out-of-sample performance of the  OP\textsuperscript{2}Ts, increasing ROC-AUC while decreasing the percent of samples rejected. In Figure \ref{fig:mimic-viz-scores}, we give a visualization of the OP\textsuperscript{2}Ts that were fit on the extended feature space, including the constituent model scores. In this case, we observe that the model scores are the most important features and are used exclusively in the best performing OP\textsuperscript{2}Ts. We also observe stability in the tree structure, as trees (a) and (b) share the same first split and a similar overall structure and model prescriptions. In fact, the trees share the same structure, with the second and third split flipped, and the thresholds for both splits being increased as the reward for rejection is increased.

\begin{figure}[h!]
\subfloat[Rejection $\alpha = 0.45$]{\includegraphics[width = 0.55\linewidth]{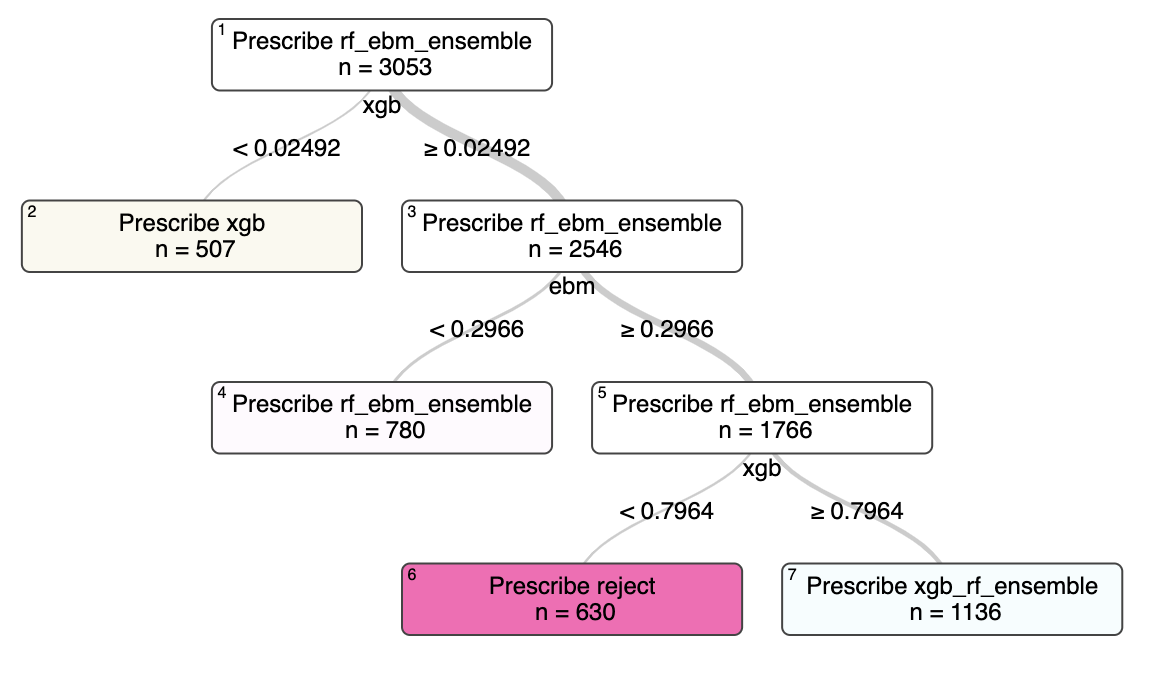}} 
\subfloat[Rejection $\alpha = 0.40$]{\includegraphics[width = 0.45\linewidth]{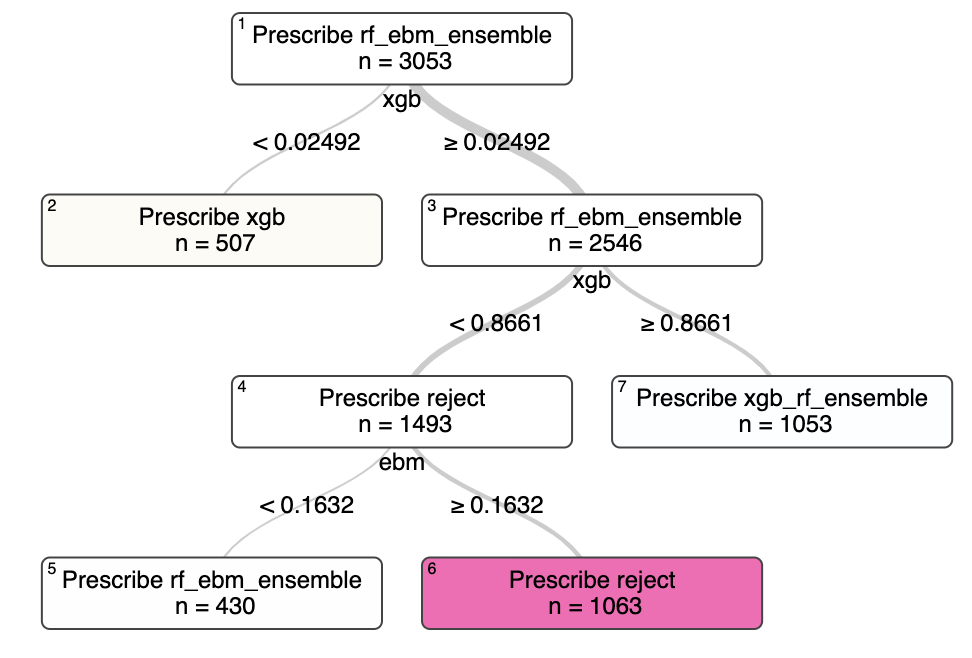}}\\
\caption{Visualizations of the OP\textsuperscript{2}Ts fit on the MIMIC-IV dataset using the original feature space extended with the constituent model scores. We observe that the best performing OP\textsuperscript{2}Ts rely exclusively on the model scores.}
\label{fig:mimic-viz-scores}
\end{figure}

\begin{figure}[h!]
    \centering
    \includegraphics[width=1\linewidth]{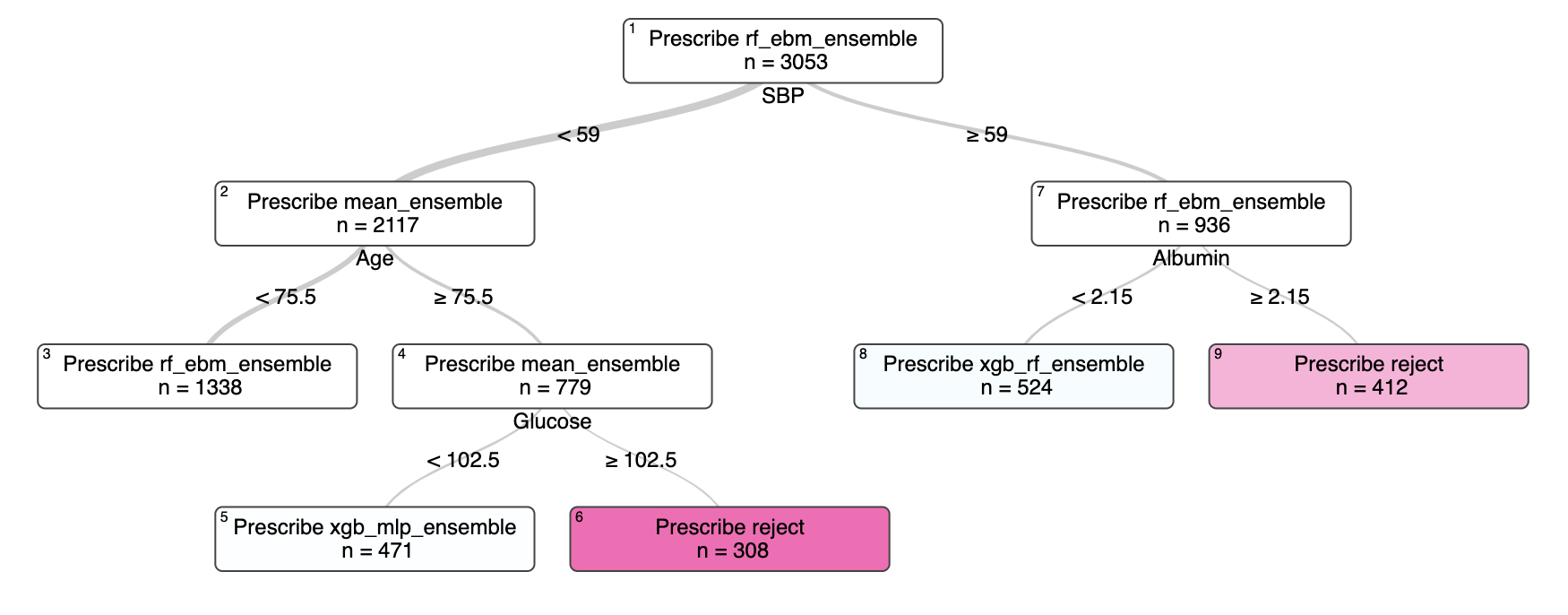}
    \caption{The OP\textsuperscript{2}T fit on the MIMIC-IV dataset using the original feature space with rejection parameter $\alpha = 0.4$.}
    \label{fig:mimic-feat-viz-0.4}
\end{figure}

\begin{figure}[h!]
    \centering
    \includegraphics[width=0.75\linewidth]{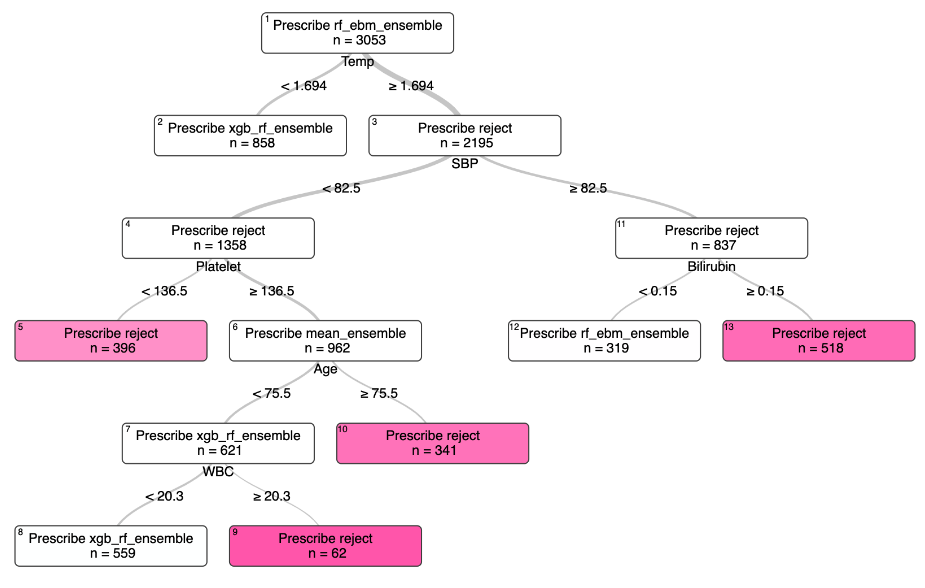}
    \caption{The OP\textsuperscript{2}T fit on the MIMIC-IV dataset using the original feature space with rejection parameter $\alpha = 0.35$.}
    \label{fig:mimic-feat-viz-0.35}
\end{figure}

In Figures \ref{fig:mimic-feat-viz-0.4} and \ref{fig:mimic-feat-viz-0.35} we give visualizations of the OP\textsuperscript{2}Ts fit on the original feature space. While we see both trees share splits on systolic blood pressure (SBP) and age, the OP\textsuperscript{2}T in Figure \ref{fig:mimic-feat-viz-0.35} with the higher reward for rejection uncovers additional conditions under which all models perform poorly. Further, the conditions for rejection have some clinical meaning. For example, in Figure \ref{fig:mimic-feat-viz-0.4}, we see a split on glucose at 102.5 $mg/dL$, which is consistent with the upper bound for the normal range, 100 $mg/dL$, for glucose as provided by the World Health \cite{WHO}. Taken together, we can interpret one set of conditions for rejection as elderly patients with low blood pressure and high glucose levels, potentially diabetic or pre-diabetic. In Figure \ref{fig:mimic-feat-viz-0.35}, we observe other conditions that are important for rejection, such as low platelet count (normal range is 150 to 450) or high white blood cell (WBC) count (normal range is 4.5 to 11). Therefore, we see that OP\textsuperscript{2}Ts not only provide a performance advantage over similar methods but they can also uncover sub-groups within the data on which all constituent models, including the ensembles, struggle.

\section{Conclusion}
In this work, we introduce a prescriptive framework for creating interpretable and adaptive model selection and ensembling policies, along with a parameterized rejection option. We demonstrate on a variety of tasks, model classes, and datasets, that our approach yields strong performance while offering interpretable policies that aid in the understanding of the efficacy of the constituent models. In particular, our OP\textsuperscript{2}T approach directly provides answers to the questions we identified in the introduction, such as under what conditions different models should be used, and when all of the models are likely to be error-prone. In contrast, traditional ME approaches fail to answer these questions, instead making use of all the models at all times, weighting them using a method that is typically a black-box itself. Our approach also makes minimal assumptions regarding the constituent models, only requiring access to model outputs, increasing the applicability of our method. In addition to empirical results, we introduce a theoretical framework for adaptive model selection with policies learned over side information, only assuming access to model outputs. We show both theoretically and empirically that our prescriptive approach can vastly outperform predictive methods when the data is not separable. We further demonstrate empirically that the resulting trees created using our method are stable, and with a rejection option, tend to converge to a particular structure as the reward for rejection is increased. We provide empirical evidence that this work could be particularly beneficial in settings where the constituent models come from significantly different model classes, such as physics-based models and pattern-based, ML models such as boosted trees and neural networks. There are many exciting directions for future work, such as investigating different reward functions, dynamically adapting ensemble weights, and exploring the theoretical properties of such expert selection systems. This work takes a first step toward using prescriptive analytics to enhance predictive models, and we hope it encourages further research in this direction.

% Acknowledgements and Disclosure of Funding should go at the end, before appendices and references

\acks{The authors acknowledge the MIT SuperCloud and Lincoln Laboratory Supercomputing Center for providing high-performance computing resources that have contributed to the research results reported in this paper.}

% Manual newpage inserted to improve layout of sample file - not
% needed in general before appendices/bibliography.

\newpage

\appendix
\section{Convergence of OP\textsuperscript{2}Ts for MIMIC-IV}\label{sec:converge}
% ADD that one result to the main table thing!
In Figure \ref{fig:mimic-converge}, we can observe how our OP\textsuperscript{2}T models for the MIMIC-IV dataset vary slowly for decreasingly small values of $\alpha$, identifying a consistent set of features and splits. For example, decreasing the rejection parameter from $\alpha = 0.25$ to $\alpha = 0.23$, corresponding to trees (a) and (b), we see that the splits at the root nodes are almost identical, and the split in the right-subtree is exactly the same. The only difference is the left sub-tree, where the split in tree (a) has essentially been pruned in tree (b). Similarly, decreasing the rejection parameter from $\alpha = 0.18$ to $\alpha = 0.15$, corresponding to trees (c) and (d), we see that the same split is made on the confidence of the XGBoost model and, subsequently, a similar split is made on the confidence of the random forest model. The only difference is that tree (d) further splits in the left sub-tree on white blood cell count, identifying another region of the feature space where all constituent models are error-prone. In this case, the region identified corresponds to patients for whom the XGBoost model is confident will survive but who have low WBC counts, which increases the likelihood of developing infections. This could suggest that the XGBoost model can be overconfident for patients who are otherwise low-risk but have a low WBC count, which could potentially lead to complications during their stay in the hospital.

\begin{figure}[h!]
    \centering
    \includegraphics[width=1\linewidth]{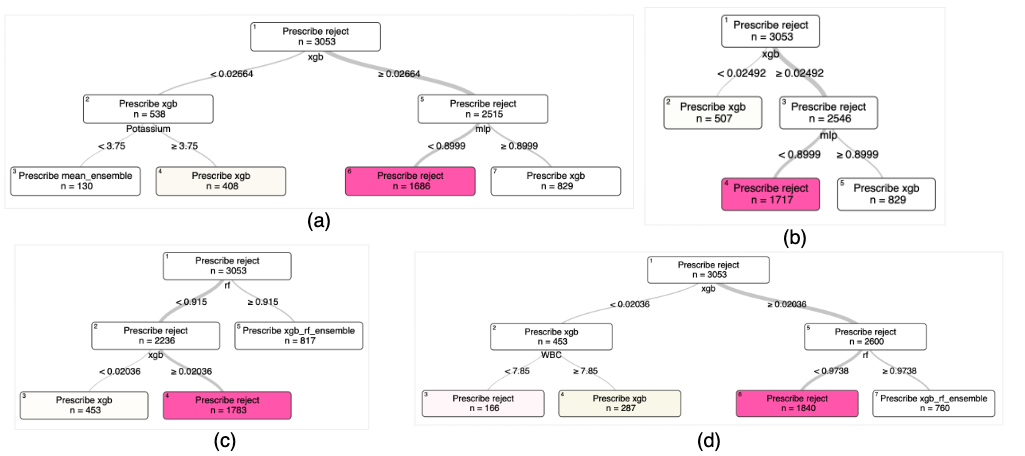}
    \caption{Convergence of OP\textsuperscript{2}Ts on the MIMIC-IV dataset for low rejection thresholds. Trees (a)-(d) correspond to rejection parameters $\alpha = [0.25, 0.23, 0.18, 0.15]$ respectively. The trees vary slowly and contain many similar or identical splits.}
    \label{fig:mimic-converge}
\end{figure}

For some leaf nodes, such as the leaf node in Tree (c) that prescribes the XGBoost model, the conditions for a sample falling into the leaf imply that the XGBoost model is near certain that these samples come from the negative class. Such a leaf node is equivalent to predicting the negative class directly for all prediction thresholds greater than $0.02036$. This also illustrates the connection between our approach, and the related formulation discussed in Appendix \ref{apdx:rej-intervals}.

\section{Learning Multi-Model Rejection Intervals for Classification}\label{apdx:rej-intervals}
In the classification setting where we have access to model scores, but not necessarily an interpretable or structured feature space $\mathcal{X}$ or side-information $\mathcal{Z}$, a natural extension of our work is to learn a policy that directly prescribes a class prediction, or rejects the input sample. In this case, we can learn a policy over the constituent model outputs alone and do not consider the input features. We first introduce the single-model setting, and then the multi-model setting, where we can apply our framework for policy learning. 
\subsection{The Single-Model Case}
We begin by studying the simplest case, binary classification with a single classifier. Specifically, we assume $\mathcal{C} = \{0, 1\}$ and we have some classifier $h: \mathcal{X} \rightarrow [0, 1]$, where the output $h(x)$ represents the likelihood of sample $x$ belonging to class $C = 1$. For now, let us assume that we just want to set a single threshold $\alpha$ for the accuracy of the classifier on the samples for which we decide to make a prediction. We will show how we can extend our formulation to include additional thresholds on other metrics, such as the false negative rate (FNR) or false positive rate (FPR). We are interested in learning an \textbf{interval for rejection}, $[a, b]$, for some $a, b \in [0, 1]$ such that $a \leq b$. If $h(x) \in [a, b]$, we reject, otherwise we predict $C=1$ if $h(x) \geq b$ or $C=0$ if $h(x) \leq a$. We are able to restrict ourselves to the set of intervals $[a, b] \subset [0, 1]$ because we can safely assume $P(y = 1 | h(x))$ is increasing monotonically in $h(x)$.

In addition to finding a rejection interval  $[a, b]$ that satisfies the accuracy constraint, we would like to make predictions as often as possible. Otherwise, we will learn to reject all the time (that is, $a = 0$ and $b =1$). Our goal is then to maximize the probability that the output of our classifier falls outside the rejection interval, 
$P(h(x) \notin [a, b])$, such that the accuracy constraint, $P(y = \hat y | h(x) \notin [a,b]) \geq \alpha$, is satisfied, where $\hat y = \mathbbm{1}_{h(x) \geq b}$ is the predicted class. Since we only have access to finite data, $\{(x_i, y_i)\}_{i=1}^n$, 
we replace $P(h(x) \notin [a, b])$ with $\frac{|I(a,b)|}{n}$ and $P(y = \hat y | h(x) \notin [a,b])$ with $\frac{1}{|I(a,b)|}\sum_{i \in I(a,b)} \mathbbm{1}_{y_i = \hat y_i}$, where $I(a,b) = \{ i \in [n] : h(x_i) \geq b \vee h(x_i) \leq a \}$ is the set of data points for which we make a prediction for a given rejection interval $[a, b]$.

This formulation can be written as a mixed integer optimization (MIO) problem. We define the problem as follows, beginning with the decision variables:
\begin{itemize}
    \item $a, b \in [0, 1]$ endpoints of rejection interval.
    \item $\mathbf{z} \in \{0, 1\}^n$ binary indicator variables, $z_i = 1$ if $x_i$ is outside of interval $[a, b]$ ($x_i \in I(a,b)$), otherwise $z_i = 0$.
    \item $\mathbf{\hat y} \in \{0, 1\}^n$ binary variables representing the predictions made by $h$ given the rejection interval $[a, b]$
\end{itemize}
With these decision variables, we define the MIO problem as follows
\begin{equation}\label{eq:single-model}
    \begin{aligned}
        \max_{a,b, \mathbf{\hat y}, \mathbf{z}} \quad & \sum_{i=1}^n z_i \\
        \text{s.t.} \quad & z_i \geq h(x_i) - b \quad \forall i \in [n], \\
        \quad & z_i \geq a - h(x_i) \quad \forall i \in [n], \\
        \quad & z_i \leq 1  - (h(x_i) - a)(b - h(x_i)) \quad \forall i \in [n], \\
        \quad & \hat y_i \geq h(x_i) - b \quad \forall i \in [n], \\
        \quad & \hat y_i \leq h(x_i) - a + 1 \quad \forall i \in [n], \\
        \quad & \sum_{i=1}^n z_i(y_i\hat y_i + (1 - y_i)(1 - \hat y_i)) \geq \alpha \sum_{i=1}^n z_i,\\
        \quad & a \leq b, \quad a,b \in [0,1].\\ 
    \end{aligned}
\end{equation}

The first five constraints in Formulation \eqref{eq:single-model} construct the indicator variables $z$ and $\hat y$ as we have defined them above. The sixth constraint is the accuracy constraint over $I(a, b)$ at a level $\alpha$. We allow $a \leq b$, rather than strict inequality, to allow for a single threshold solution if it is possible to classify all the data with accuracy $\alpha$ without rejecting any predictions. Note that with $(\mathbf{z}, \mathbf{\hat y}, y)$ we can compute any accuracy metric (FNR, FPR, Precision, Recall) over the data in $I(a,b)$. We can therefore add a threshold constraint for any of these metrics to the formulation (e.g. FNR $\leq \alpha_{FNR})$. Further, note that we can swap the objective and the accuracy constraint, and instead maximize accuracy subject to a constraint on the percent of samples predicted. This is a separate but closely related formulation.
% \textcolor{red}{What class of MIO is this? Can/Should we rewrite to eliminate products of binary variables in the constraints?} \textcolor{blue}{We could also add robustness against label uncertainty by adding a penalty to the accuracy constraint. However, I think this can be achieved by simply increasing $\alpha$. We can also add constraints regarding the percent of positive/negative samples predicted.}

This exercise serves purely to demonstrate that the single model, binary classification case can be cast as an MIO problem and tractably solved to optimality. In practice, one could adopt an iterative method that finds the optimal solution in time polynomial in the number of points in the validation set. In this approach, we iterate over all possible intervals, either by iterating over all unique values $h(x_i)$, or a pre-defined grid. The latter is a heuristic approach, but very quickly yields an approximate solution.

\subsection{Multi-Model Rejection.} Now suppose we have a collection of $m$ models $\mathbf{h} = (h_1,\dots, h_m), h_i: \mathcal{X} \rightarrow [0,1]$ that have been fit on some training dataset. Focusing on the case of binary classification, for some input $x \in \mathcal{X}$, we have output scores $\mathbf{h}(x) = (h_1(x),\dots, h_m(x)) \in [0,1]^m$ from the $m$ models. We would like to learn a function $f: [0,1]^m \rightarrow \{0,1, \text{Reject}\}$ that decides what class to predict, or if we should reject, based on the outputs $\mathbf{h}$ of the collection of models. Unlike the single model case, there are many classes of models that may be appropriate for learning this function $f$. We can adapt our OP\textsuperscript{2}T approach to learn such a function. For a given validation set $X$, we can find the output scores $\mathbf{h}(X) =\{(h_1(x_i),\dots, h_m(x_i))\}_{i=1}^n \in [0,1]^{n \times m}$. We assume our state space to be $[0,1]^m$ and $\mathbf{h}(X)$ represents samples from this state space. We define our action space as $a = \{0, 1, \text{Reject}\}$, representing either predicting the positive or negative class or rejecting the sample. Note that we can generalize this to the multi-class setting by extending the action space and state space. Given the true class labels $y$ for the validation dataset, we then define the reward for each action $a_i$ on sample $(x_i, y_i)$ as follows
\begin{equation}
    R(a_i ; x_i, y_i) = \begin{cases} 
    1 & a_i = y_i \\
    0 & a_i \neq y_i \\
    \alpha & a_i = \text{reject}
    \end{cases}
\end{equation}
Where $\alpha > 0$ is our rejection threshold. We then fit an optimal policy tree $T_O$ to the data $\{(\mathbf{h}(x_i), R(a,x_i,y_i))\}_{i=1}^n$ to learn a policy for prediction and rejection. Note that the reward matrix $R$ is complete, so there is no need for any reward estimation in this setting. Since OPTs select the policy that maximizes the total reward in each leaf, we have defined our reward so that a given OPT will only prescribe a class label in a leaf if the accuracy on the validation data in that leaf is greater than $\alpha$. 

\hfill
\subsection{Multi-Model Rejection with Imbalanced Data and Costs}
In the case that we are only concerned about accuracy, and do not assign different costs for false positives versus false negatives, the method we have described in the previous subsection is sufficient. If there are different costs associated with false positives and false negatives, or if the data is imbalanced, setting a single threshold for rejection may not be enough. In this case, we can create a collection of optimal policy trees $T_1,\dots, T_k$ that each satisfies a certain rejection threshold. For each tree $T_k$, we define a separate reward matrix
\begin{equation}
    R_k(a_i ; x_i, y_i) = \begin{cases} 
    \beta_{ky_i} & a_i = y_i \\
    0 & a_i \neq y_i \\
    \alpha_k & a_i = \text{defer}
    \end{cases}
\end{equation}
which is now parameterized by both a rejection threshold $\alpha_k$ as well as a set of weighted rewards $\beta_{k1}, \cdots, \beta_{kC}$ for each class label. With these parameters, we can encode any constraint on the accuracy for each class. For example, suppose we have an imbalanced binary classification dataset with $80\%$ of the data from the negative class and $20\%$ from the positive class. Further, suppose we care much more about false negatives than we do about false positives, so we are okay with predicting the positive class any time we expect the probability of a sample being positive to be at least $40\%$ (that is, $P(\text{positive} | \text{predicted positive}) \geq 0.4$). Then we could set $\beta_{k1} = 1$, $\beta_{k2} = 2$, and $\alpha_1 = \alpha_2 = 0.8$, and we will fit a tree $T_k$ such that for any leaf in $T_k$, the prediction in that leaf can be the positive class if at least $40\%$ of the validation data that falls into that leaf is from the positive class. We may also want to create a separate tree with a rejection threshold specific to the negative class, for example. Once we have created these trees, we can then take an action by looking at the intersection of the actions prescribed by the individual trees. In our example, we could define the following mapping:
\begin{equation}
    \begin{cases}
        \text{Prescribe Positive,} & T_1(x) = 1 \wedge (T_2(x) = 1 \vee T_2(x) = \text{Reject})\\
        \text{Prescribe Negative,} & T_2(x) = 0 \wedge (T_1(x) = 0 \vee T_1(x) = \text{Reject})\\
        \text{Prescribe Reject,} & T_1(x) = T_2(x) = \text{Reject} \vee (T_1(x),T_2(x) \neq \text{Reject} \wedge T_1(x) \neq T_2(x))\\
    \end{cases}
\end{equation}
We note that in the last case, we prescribe ``Reject'' if both trees prescribe ``Reject'' \textbf{or} if both trees prescribe a class label, but they disagree on their predictions.

\section{Proofs of Propositions}
\subsection{Rejection Learning with $R_{CE}$ Reward}\label{proof:class-rej-properties}
\begin{proposition}\label{prop:class-rej-properties}
Suppose we are given an OP\textsuperscript{2}T $T_{O}$ fit on some input data $\{x_i\}_{i=1}^n \in \mathcal{X}^n$, label data $\{y_i\}_{i=1}^n \in \mathcal{C}^n$, and constituent models $h_1,\dots, h_m$ with the cross-entropy reward $R_{CE}$. Further, suppose we have defined a rejection model $h_r$ parameterized by constants $\boldsymbol{\alpha} = (\alpha_1,\dots, \alpha_K)$. For any leaf node $l \in T_{O}$, we denote by $i \in l$ the case that input sample $x_i$ falls into leaf $l$. Then for any leaf node $l$ with $|l| = n_l$, the tree $T_{O}$ prescribes the rejection model $h_r$ if and only if
$$\sum_{i \in l} \log \left(\frac{1-\alpha_{y_i}}{\hat y_{ijy_i}}\right) > 0 \quad \forall j \in [m].$$
Further, if $\alpha_i = \alpha$ for all $i \in [K]$, the critical rejection threshold is given by
$$\alpha^* = 1 - \max_{j \in [m]} \exp\bigg(\frac{1}{n_l}\sum_{i \in l} \log(\hat y_{ijy_i})\bigg),$$
such that for all $\alpha < \alpha^*$ the leaf prescription will be to reject.
\end{proposition}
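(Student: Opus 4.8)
The plan is to reduce everything to the per-leaf prescription rule established in Section~\ref{sec:opt}, since the statement concerns a \emph{given}, already-fitted tree $T_O$. Recall that for any fixed tree structure the objective separates over leaves, and in each leaf $l$ the prescribed action is the one attaining the maximum over the action space of the total reward $\sum_{i \in l} R_{CE}(x_i, y_i, \cdot)$. Because any fixed ensemble $\mathbf{w}^\top \mathbf{h}$ is itself a map $\mathcal{X} \to \Delta^K$, one may without loss of generality absorb all ensemble actions into the list $h_1,\dots,h_m$; the action space is then $\{h_1,\dots,h_m\}\cup\{h_r\}$, and rejection is prescribed in $l$ exactly when the total reward of $h_r$ strictly exceeds that of every $h_j$ (ties, where they occur, being resolved toward a constituent model, consistent with the ``only if'').

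First I would write out the two rewards explicitly. Since $R_{CE}(x_i,y_i,h_j)=\sum_{k}\mathbbm{1}\{y_i=k\}\log \hat y_{ijk}=\log \hat y_{ij y_i}$, and by the calibration definition of $h_r$ we have $R_{CE}(x_i,y_i,h_r)=\log(1-\alpha_{y_i})$, the condition ``$h_r$ beats $h_j$ over leaf $l$'' reads $\sum_{i\in l}\log(1-\alpha_{y_i}) > \sum_{i\in l}\log \hat y_{ij y_i}$. Using that every $\hat y_{ijy_i}\in(0,1]$ and $1-\alpha_{y_i}\in(0,1]$ is strictly positive, I can combine the two sums into $\sum_{i\in l}\log\!\big((1-\alpha_{y_i})/\hat y_{ijy_i}\big) > 0$; imposing this for every $j\in[m]$ is exactly the first displayed equivalence, which is the whole content of the first claim.

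For the second claim I would specialize to $\alpha_k=\alpha$ for all $k$, so the left-hand sum collapses to $n_l\log(1-\alpha)$ and the rejection condition becomes $n_l\log(1-\alpha) > \sum_{i\in l}\log \hat y_{ijy_i}$ for all $j$, i.e. $\log(1-\alpha) > \max_{j\in[m]} \frac{1}{n_l}\sum_{i\in l}\log \hat y_{ijy_i}$. Applying the strictly increasing exponential to both sides and using $\exp(\max_j a_j)=\max_j \exp(a_j)$ turns this into $1-\alpha > \max_{j\in[m]} \exp\!\big(\frac{1}{n_l}\sum_{i\in l}\log \hat y_{ijy_i}\big)$, hence into $\alpha < \alpha^*$ with $\alpha^*$ as defined; so for every $\alpha<\alpha^*$ the leaf prescription is rejection.

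There is no substantive obstacle here — the argument is a direct substitution into the per-leaf optimality rule followed by elementary monotone rearrangement. The only points that warrant a sentence of care are: (i) well-definedness, i.e. assuming the constituent models assign strictly positive mass to the true label (otherwise $R_{CE}=-\infty$, which only strengthens the case for rejection and is still covered by the same inequality read in the extended reals); (ii) the tie-breaking convention implicit in stating the result as an ``if and only if''; and (iii) the observation that admitting fixed ensembles into the action space changes nothing, since each behaves formally as one more constituent model.
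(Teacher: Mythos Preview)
Your proposal is correct and follows essentially the same route as the paper: both start from the per-leaf optimality rule, plug in the explicit forms $R_{CE}(x_i,y_i,h_j)=\log\hat y_{ijy_i}$ and $R_{CE}(x_i,y_i,h_r)=\log(1-\alpha_{y_i})$, rearrange to obtain the log-ratio condition, and then specialize to constant $\alpha$ and invert via monotonicity of $\exp$ to extract $\alpha^*$. Your added remarks on absorbing ensembles, tie-breaking, and the $\hat y_{ijy_i}=0$ edge case are not in the paper's proof but are harmless clarifications rather than a different method.
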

\begin{proof}
    For any fixed leaf node $l$, the total reward for rejection is $\sum_{i \in l} \log(1-\alpha_{y_i})$ and for any constituent model $h_j$ the total reward is $\sum_{i \in l}\log(\hat y_{ijy_i})$. Since the leaf prescribes the model with the greatest total reward, it follows that rejection prescribes if and only if 
    $$\sum_{i \in l} \log(1-\alpha_{y_i}) > \sum_{i \in l}\log(\hat y_{ijy_i}).$$
    Rearranging terms, we arrive at the expression 
    $$\sum_{i \in l}\log\bigg(\frac{1-\alpha_{y_i}}{\hat y_{ijy_i}}\bigg) > 0,$$
    as in the proposition. Now, if we assume $\boldsymbol{\alpha}$ is constant such that $\alpha_i = \alpha$ for all $i \in [K]$, the total reward for rejection in leaf $l$ becomes $n_l\log(1-\alpha)$. Then we can find the critical rejection threshold by setting the reward for rejection equal to the maximum total reward over all the constituent models as follows
    \begin{align*}
        n_l\log(1-\alpha) & = \max_{j \in [m]}\sum_{i \in l}\log(\hat y_{ijy_i}) \\
        1 - \alpha & = \max_{j \in [m]}\exp\bigg(\frac{1}{n_l}\sum_{i \in l}\log(\hat y_{ijy_i})\bigg) \\
        \alpha^* & = 1 - \max_{j \in [m]}\exp\bigg(\frac{1}{n_l}\sum_{i \in l}\log(\hat y_{ijy_i})\bigg).
    \end{align*}
    We can pull the max term out of the exponent because $e^x$ is a monotonic function. Finally, note that $\sum_{i \in l} \log\big(\frac{1-\alpha}{\hat y_{ijy_i}}\big)$ is strictly monotonic decreasing in $\alpha \in (0, 1)$. We can therefore conclude that for all $\alpha < \alpha^*$, the leaf prescription will be to reject.
\end{proof}
\subsection{Proof of Proposition \ref{prop:rewards}}\label{proof:rewards}
\begin{proof}
We prove this proposition by construction. We first define the model that maximizes the expected reward over all of $\mathcal{X} \sim \mathcal{D}$:
\begin{equation}
    h^* = \argmax_{h \in H} E_{\mathcal{D}}[R(x, f^*(x), h)]
\end{equation}
Let $C = \mathcal{Z} \setminus (A \cup B)$. There are two cases to consider. The first is when $C = \emptyset$ or $P_D(\cup_{z \in C}v(z)) = 0$. In this case, we can define the following policy function:
\begin{equation}
    \pi(z) = \begin{cases}
        h_A^* & z \in A\\
        h_B^* & z \in B\\
    \end{cases}
\end{equation}
When $C \neq \emptyset$ and $P_D(\cup_{z \in C}v(z)) > 0$, we can define
\begin{equation}
    h_C^* = \argmax_{h \in H} E_{\mathcal{D}}[R(x, f^*(x), h) \cdot \mathbbm{1}\{x \in g(C)\}],
\end{equation}
such that 
$$E_{\mathcal{D}}[R(x, f^*(x), h_C^*) \cdot \mathbbm{1}\{g(x) \in C\}] \geq E_{\mathcal{D}}[R(x, f^*(x), h^*) \cdot \mathbbm{1}\{g(x) \in C\}].$$
This model $h_C^*$ does not need to be unique, and it can be that $h_C^* = h_A^*$ or $h_C^* = h_B^*$. Further, $C$ may not be a dominated subspace, as it may not be connected. We can then define the following policy function:
\begin{equation}
    \pi(z) = \begin{cases}
        h_A^* & z \in A\\
        h_B^* & z \in B\\
        h_C^* & z \in C\\
    \end{cases}
\end{equation}
Since $h_A^* \neq h_B^*$, it must be that either $h^* \neq h_A^*$ or $h^* \neq h_B^*$, or both. This implies that we have either
$$E_{\mathcal{D}}[R(x, f^*(x), h_A^*) \cdot \mathbbm{1}\{g(x) \in A\}] > E_{\mathcal{D}}[R(x, f^*(x), h^*) \cdot \mathbbm{1}\{g(x) \in A\}]$$
or 
$$E_{\mathcal{D}}[R(x, f^*(x), h_B^*) \cdot \mathbbm{1}\{g(x) \in B\}] > E_{\mathcal{D}}[R(x, f^*(x), h^*) \cdot \mathbbm{1}\{g(x) \in B\}]$$
or both. Putting these inequalities together, we get
\begin{align}  
E_{\mathcal{D}}[R(x, f^*(x), \pi(g(x))] = &
\; E_{\mathcal{D}}[R(x, f^*(x), h^*_A)\cdot \mathbbm{1} \{g(x) \in A\}]  \nonumber \\
&
\; + E_{\mathcal{D}}[R(x, f^*(x), h^*_B)\cdot \mathbbm{1} \{g(x) \in B\}]  \nonumber \\
& \; +  E_{\mathcal{D}}[R(x, f^*(x), h^*_C)\cdot \mathbbm{1} \{g(x) \in C\}]   \nonumber\\
> & \; E_{\mathcal{D}}[R(x, f^*(x), h^*)].  \label{eq-dominated-ineq}
\end{align}
By construction, since we assume $A$ and $B$ are disjoint, we know $A,B,C$ are disjoint sets. Let us define $A' = \cup_{z \in A} v(z)$, $B' = \cup_{z \in B} v(z)$, and $C' = \cup_{z \in C} v(z)$. Since we assume the function $v$ maps to disjoint subsets of $\mathcal{X}$, we know $A',B',C'$ are disjoint. It follows that the sets $\{x \in \mathcal{X} : g(x) \in A\}$, $\{x \in \mathcal{X} : g(x) \in B\}$, and $\{x \in \mathcal{X} : g(x) \in C\}$ are disjoint. Further, since $g$ is defined over all $x \in \mathcal{X}$ and $A\cup B \cup C = \mathcal{Z}$, it follows that $P_D(\{x \in \mathcal{X} : g(x) \in A\} \cup \{x \in \mathcal{X} : g(x) \in B\} \cup \{x \in \mathcal{X} : g(x) \in C\}) = 1$. We can therefore split the reward function as in Eq. \eqref{eq-dominated-ineq} and by linearity of expectation, we get our result.
\end{proof}
% Conditions for learning? Seems we need for there to exist a partition of the space (using parallel splits) such that there is not a dominant model across all partitions. A thought on the parallel split condition - if there exists any closed, connected subspace $P_1$ such that the condition holds for $P_1$ and $\Omega \setminus P_1$, we can find a tree with parallel splits that can approximate the curve. Of course, if the subspace is high-dimensional and non-linear, it becomes challenging to actually learn such a tree that fits the curve well. Fitting the OP\textsuperscript{2}T on the validation dataset (some dataset other than the training data) gives us an estimate of the out-of-sample reward manifold for each learner. In a sense, we are looking for the Pareto-curve (?) of the reward manifolds over all the learners.
\subsection{Proof of Proposition \ref{prop-op2t-vs-meta-tree}}
\label{appx-op2t-vs-meta-tree}
\begin{proof}
For simplicity, let us consider the case with two constituent models $\mathcal{H} = \{h_1, h_2\}$, and assume $\mathcal{X} = \mathcal{Z}$. Suppose for this set of models $\mathcal{H}$ that there exists a subset of the data $A \subseteq \{(x_i, q_i)\}_{i=1}^n$ that is not separable, either at all, or by parallel split decision trees up to some maximum depth $D_{max}$. That is, for all decision trees up to $D_{max}$, the misclassification rate (or entropy) is greater than or equal to the misclassification rate or entropy of $A$ itself. Since the data in $A$ is not separable, the best the Meta-Tree approach can do is prescribe a single constituent model for the data in $A$. Let us define $A_1 = \{ (x_i, q_i) \in A : q_i = h_1\}$ and similarly $A_2 = \{ (x_i, q_i) \in A : q_i = h_2\}$. Suppose $|A_1| > |A_2|$, such that $h_1$ corresponds to the majority class. It follows that the best a Meta-Tree can do over the set $A$ is prescribe the model $h_1$, as this minimizes the misclassification error. Now, let us define 
    $$\sum_{x_i \in A_1} R(x_i, y_i, h_1) = R_{11},$$
    $$\sum_{x_i \in A_1} R(x_i, y_i, h_2) = R_{12},$$
    $$\sum_{x_i \in A_2} R(x_i, y_i, h_1) = R_{21},$$
    $$\sum_{x_i \in A_2} R(x_i, y_i, h_2) = R_{22}.$$
Then the total reward for prescribing model $h_2$ is $R_{12} + R_{22}$, and the total reward for prescribing model $h_1$ is $R_{11} + R_{21}$. Let $\Delta_1 = R_{11} - R_{12}$, and note that $\Delta_1 \geq 0$ since $R(x_i, y_i, h_1) \geq R(x_i, y_i, h_2)$ for all $i \in A_1$ by definition. Similarly, we can define $\Delta_2 = R_{22} - R_{21}$. We assume $R$ has either no finite upper or lower bound, such that $\Delta_2$ can be arbitrarily large, as we can either send $R_{22} \rightarrow \infty$ or $R_{21} \rightarrow - \infty$. Then the difference in total reward,
$$R_{12} + R_{22} - R_{11} - R_{21} = \Delta_2 - \Delta_1,$$
can be arbitrarily large as $\Delta_2 \rightarrow \infty$. Since the OP\textsuperscript{2}T approach comes from the same hypothesis class, the best an OP\textsuperscript{2}T can do is prescribe a single model over $A$. However, the model that maximizes total reward over $A$ will be prescribed. Then in our example, since the total reward for $h_2$ will be greater than $h_1$ over $A$ as $\Delta_2 \rightarrow \infty$, an OP\textsuperscript{2}T will prescribe model $h_2$. Therefore, the total reward for the OP\textsuperscript{2}T approach can be arbitrarily greater than the reward from the Meta-Tree approach. Next, let us consider a fixed dataset of size $n$ with $m$ constituent models, and define $R_{max} = \max_{i \in [n], (j,k) \in [m]} |R(x_i, y_i, h_j) - R(x_i, y_i, h_k)|$. Using our previous construction, we know for all $i \in A_2$ that $R(x_i, y_i, h_2) - R(x_i, y_i, h_1) \leq R_{max}$. We also observe that $|A_2| < \frac{n}{2}$ since we assume $|A_1| > |A_2|$. It follows that $\Delta_2 < \frac{n}{2}R_{max}$ and, noting that $\Delta_1 \geq 0$, we have
$$\Delta_2 - \Delta_1 < \frac{n}{2}R_{max}.$$
To prove this upper bound in general, suppose to the contrary that for the resulting decision trees $T_O$ and $T_M$ we have $$\sum_{i=1}^n R(x_i, y_i, T_O(g(x_i)) - \sum_{i=1}^n R(x_i, y_i, T_M(g(x_i)) > \frac{n(m-1)}{m}R_{max}.$$ 
Since $R_{max}$ is an upper bound on the difference in reward among all models for the given dataset, we know, for all $i \in [n]$,
$$R(x_i, y_i, T_O(g(x_i)) - R(x_i, y_i, T_M(g(x_i)) \leq R_{max}.$$
It follows that for at least $\frac{n(m-1)}{m}$ samples, $\max_{j \in [m]}R(x_i, y_i, h_j) > R(x_i, y_i, T_M(x_i))$. Since, in the Meta-Tree setup, the samples are labeled according to the constituent model achieving the maximum reward, this implies that $T_M$ misclassifies more than $\frac{n(m-1)}{m}$ samples. However, there must be at least $\frac{n}{m}$ samples in the majority class, such that predicting the majority class alone achieves a lower misclassification error than $T_M$. This leads to a contradiction, as a depth-zero decision tree would achieve a lower misclassification error than $T_M$. This gives us our result and provides an intuitive upper bound on the potential gain of using relative rewards as a function of the maximum difference in rewards between constituent models. The same argument holds if we replace the Meta-Tree approach with Meta-XGB. In that case, we can consider a corresponding subset of the data $A \subseteq \{(x_i, q_i)\}_{i=1}^n$ that is not separable by a boosted tree model.
\end{proof}
\subsection{Proof of Proposition \ref{prop:rejection-converge}}\label{appx-rejection-properties}
\begin{proof}
    We begin with the first property. Suppose $P_D(R(x,f^*(x), h_i) = R_{ub}) = 0$. Then for any connected subspace $A \subseteq \mathcal{X}$ with positive measure, $P_D(A) > 0$, we know 
    $$E_{\mathcal{D}}[R(x, f^*(x), h_i) \cdot \mathbbm{1}\{x \in A\}] < R_{ub} \quad \forall i \in [m].$$
    It then follows immediately that since $\alpha_n \rightarrow R_{ub}$, there exists some $N$ such that for all $n \geq N$, 
    $$E_{\mathcal{D}}[R(x, f^*(x), h_r^n) \cdot \mathbbm{1}\{x \in A\}] > \max_{h \in H}E_{\mathcal{D}}[R(x, f^*(x), h) \cdot \mathbbm{1}\{x \in A\}].$$
    Since this reasoning holds for any maximal dominated subspace, we have our result.

    The first part of the second statement holds by definition since we state $s_j^{n+1} \subseteq s_i^n$. The second part follows from the fact that $s_j^{n+1} \subseteq s_i^n$ and the expected reward for the corresponding dominant model $h^*_{s_j^{n+1}}$ must be greater than $\alpha_{n+1}$, which is greater than $\alpha_n$. Since we can always set $h^*_{s_j^{n+1}} = h^*_{s_j^{n}}$, the expected reward for $h^*_{s_j^{n+1}}$ over $s_j^{n+1}$ must always be at least the same as the expected reward for $h^*_{s_j^{n}}$. Therefore, the inequality holds.
\end{proof}

\section{Full Concrete and Projectile Motion OP\textsuperscript{2}Ts}
Below, we reproduce the full-depth OP\textsuperscript{2}Ts that achieve the best out-of-sample MSE for both the concrete compressive strength and projectile motion datasets from our experiments in Section \ref{sec:concrete} and Section \ref{sec:projectile}, respectively.

\begin{figure}[h!]
    \centering
    \includegraphics[width=1\linewidth]{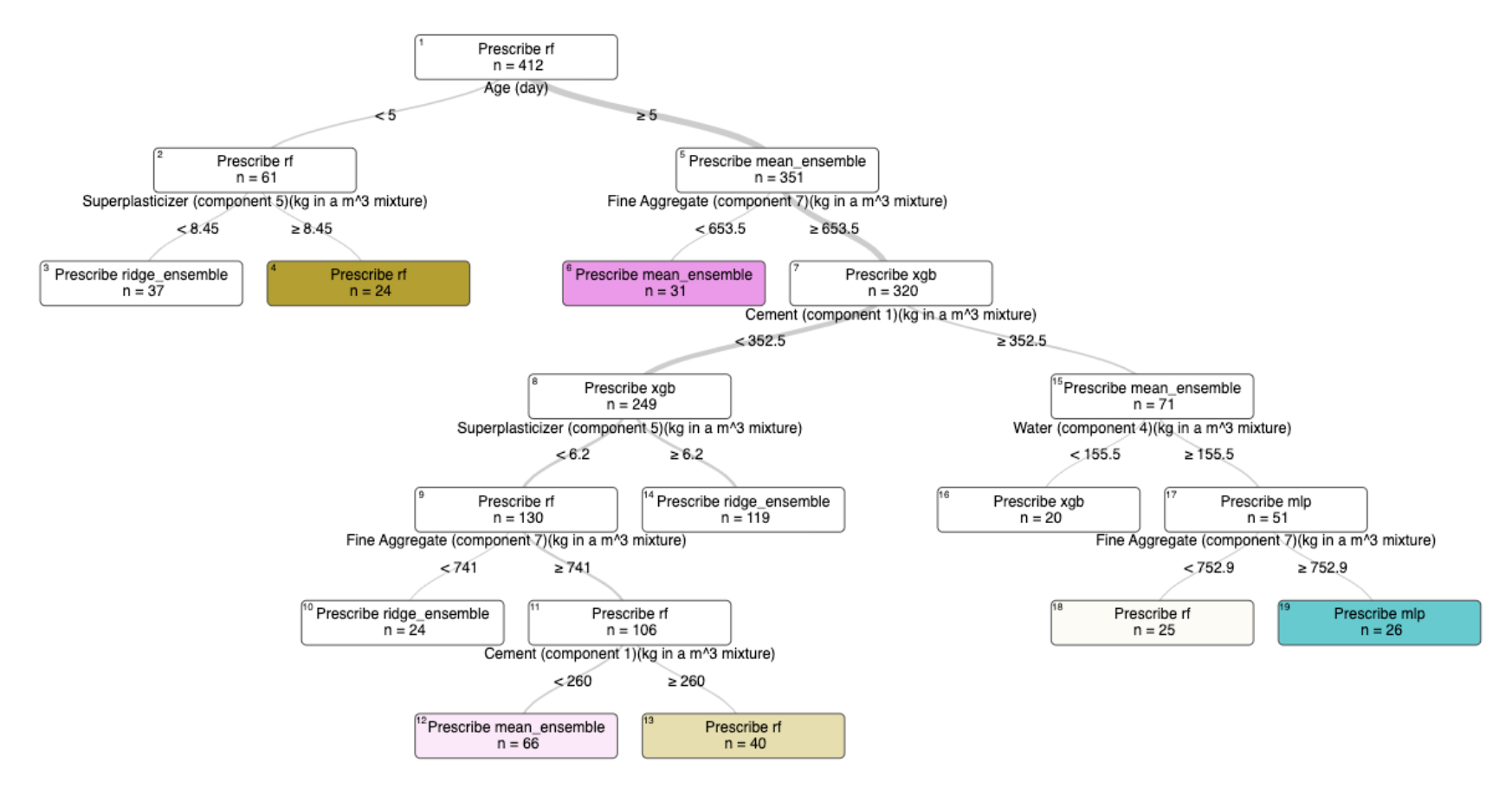}
    \caption{The OP\textsuperscript{2}T with max depth $d=10$ fit on the Concrete Compressive Strength dataset, achieving a 7.8\% reduction in MSE over the best constituent model in hindsight. Depending on the context, the OP\textsuperscript{2}T adaptively selects different models and model ensembles.}
    \label{fig:concrete-full-depth-viz}
\end{figure}

\begin{sidewaysfigure}[h!]
    \centering
    \includegraphics[width=1\linewidth]{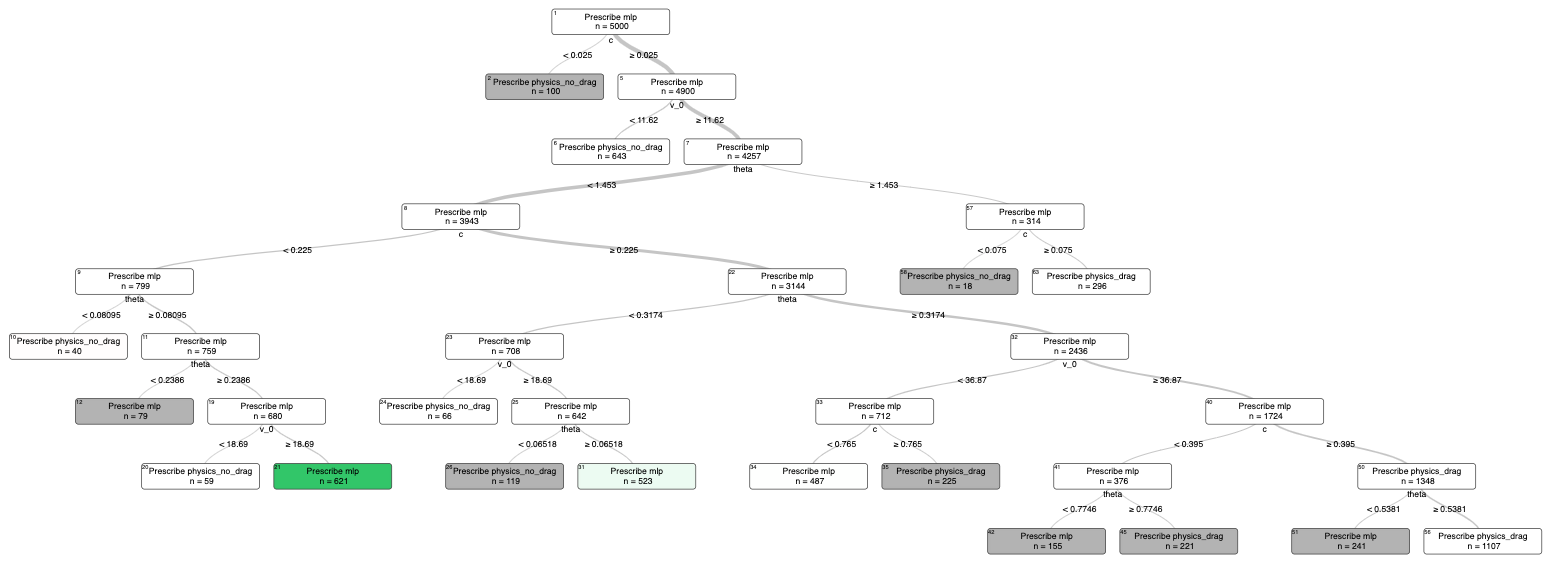}
    \caption{The OP\textsuperscript{2}T model, without rejection, fit on projectile motion data with both physics-based and ML constituent models. The prescription ``mlp'' corresponds to the MLP model, ``physics\_no\_drag'' corresponds to Eq.  \eqref{eq:physics-no-drag}, and ``phyics\_drag'' corresponds to Eq. \eqref{eq:physics-drag}.}
    \label{fig:physics-no-reject}
\end{sidewaysfigure}

\clearpage
\vskip 0.2in
\bibliography{main}

\end{document}